\documentclass[10pt,twocolumn]{article}
\usepackage[margin=0.75in]{geometry}
\usepackage{amsmath,amssymb,amsthm,bm,mathtools,graphicx,enumitem,booktabs}
\usepackage{framed}
\usepackage[colorlinks=true,linkcolor=blue,citecolor=blue]{hyperref}

\newtheorem{theorem}{Theorem}
\newtheorem{lemma}{Lemma}
\newtheorem{proposition}{Proposition}
\newtheorem{corollary}{Corollary}
\newtheorem{assumption}{Assumption}
\newtheorem{definition}{Definition}
\newtheorem{remark}{Remark}

\newcommand{\R}{\mathbb{R}}
\newcommand{\E}{\mathbb{E}}
\newcommand{\Prob}{\mathbb{P}}
\newcommand{\norm}[1]{\left\|#1\right\|}
\newcommand{\inner}[2]{\langle #1, #2 \rangle}
\newcommand{\bw}{\bm{w}}
\newcommand{\bz}{\bm{z}}
\newcommand{\bx}{\bm{x}}
\newcommand{\by}{\bm{y}}
\newcommand{\ba}{\bm{a}}
\newcommand{\bb}{\bm{\beta}}
\newcommand{\bbs}{\bm{\beta}^\ast}
\newcommand{\bbt}{\tilde{\bm{\beta}}}
\newcommand{\bDelta}{\bm{\Delta}}
\newcommand{\bA}{\bm{A}}
\newcommand{\bB}{\bm{B}}
\newcommand{\bPhi}{\bm{\Phi}}
\newcommand{\bPsi}{\bm{\Psi}}
\newcommand{\dH}{\delta_{H}}
\newcommand{\psidH}{\psi_{\dH}}
\newcommand{\prox}{\mathrm{prox}}

\DeclareMathOperator{\supp}{supp}

\title{Demixing Sparse Signals from Nonlinear Observations using
Generalized Non-convex Regularization}
\author{Raziyeh Takbiri%
\thanks{R.~Takbiri is with the School of Electrical Engineering, Iran
University of Science and Technology, Tehran, Iran (e-mail:
raziyeh.takbiri@gmail.com). Code to reproduce all experiments is
available with this manuscript.}}
\date{}

\begin{document}
\maketitle

\begin{abstract}
We consider the recovery of a pair of sparse vectors from a limited
number of nonlinear observations of their superposition:
$y_i=g(\inner{\ba_i}{\bPhi\bw^\ast+\bPsi\bz^\ast})+e_i$, $i=1,\dots,m$,
with $m\ll n$, incoherent orthonormal bases $\bPhi,\bPsi$, a scalar link
$g$, and noise $e_i$ that may be heavy-tailed or contaminated. We
propose a regularization-based framework combining a Huberized data
fidelity with generalized folded-concave penalties (SCAD, MCP), and a
two-block proximal alternating algorithm with backtracking (NLD-PALM)
whose whole iterate sequence provably converges to critical points under
the Kurdyka--\L{}ojasiewicz property, with local linear rates. On the
statistical side we establish restricted strong convexity of the
Huberized nonlinear loss through an exact sign-definite decomposition,
and derive estimation error bounds of order
$\sigma\sqrt{s\log(n)/m}$ that hold at \emph{every} localized stationary
point, an oracle rate $\sigma\sqrt{s/m}$ free of $\log n$ and shrinkage
bias under a beta-min condition, and a co-equal recovery theorem for
\emph{unknown} monotone links via a linear surrogate and a clipped
Plan--Vershynin decoupling. The estimator requires no knowledge of the
sparsity levels, and its guarantees hold under symmetric noise with only
finite variance. Experiments at $n=512$ under a frozen data-driven
regularization rule show an earlier phase transition than convex
$\ell_1$ demixing and greedy hard-thresholding baselines, a $35\times$
accuracy advantage over squared-loss estimation under $5\%$ gross
outliers, and successful demixing of spike-plus-background signals
observed through a saturating amplifier.
\end{abstract}

\noindent\textbf{Keywords:} sparse demixing, nonlinear observations,
non-convex regularization, folded-concave penalties, robust recovery,
proximal alternating minimization, Kurdyka--\L{}ojasiewicz.

\section{Introduction}\label{sec:intro}

Signal demixing --- separating a superposition of structured components
from few measurements --- and nonlinear compressed sensing have each
been studied extensively, but their intersection remains sparse.
Acquisition front ends saturate, clip, compand, and quantize; the
recorded data are then \emph{nonlinear} functions of a mixture. Formally
we observe
\begin{equation}\label{eq:model}
y_i = g\bigl(\inner{\ba_i}{\bx^\ast}\bigr)+e_i,\qquad
\bx^\ast=\bPhi\bw^\ast+\bPsi\bz^\ast,
\end{equation}
$i=1,\dots,m\ll n$, with $\norm{\bw^\ast}_0\le s_1$,
$\norm{\bz^\ast}_0\le s_2$, and ask for both components.

The state of the art for \eqref{eq:model} is the greedy line of Soltani
and Hegde \cite{soltani2017fast,soltani2017demixing}: demixing hard
thresholding with sample complexity $m=\mathcal O(s\log(n/s))$,
$s=s_1+s_2$, requiring the exact sparsity levels as input and offering
no robustness theory. In the \emph{linear} regime ($g=\mathrm{id}$),
non-convex regularization is known to outperform convex $\ell_1$
demixing \cite{wen2019efficient,wen2017survey}, reflecting the general
folded-concave phenomenon \cite{fan2001scad,zhang2010mcp}: penalties
that debias large coefficients achieve oracle behavior where the LASSO
carries an irreducible $\lambda\sqrt s$ shrinkage bias
\cite{loh2015regularized,loh2017support}. For single-index models
without superposition, Plan and Vershynin \cite{plan2016generalized}
showed that unknown monotone nonlinearities can be absorbed into an
effective scaling plus noise. To the best of our knowledge, no prior
work combines generalized non-convex regularization with nonlinear
demixing, in either the known- or unknown-link regime.

\subsection{Contributions}
\begin{enumerate}[leftmargin=1.5em]
\item \textbf{Estimator (Sec.~\ref{sec:formulation}).} A Huberized
non-convex program for \eqref{eq:model}. Huberization is structural,
not cosmetic: it eliminates a curvature--noise compatibility condition
that squared-loss analysis provably requires, and extends all guarantees
to symmetric noise with only finite variance.
\item \textbf{Algorithm (Sec.~\ref{sec:algorithm}).} NLD-PALM, a
two-block proximal-gradient method with per-block backtracking and an
over-relaxation factor $\eta>1$ that we show is \emph{necessary} for
sufficient decrease with non-convex penalties. Whole-sequence
convergence to critical points under the KL property
(Theorem~\ref{thm:conv}), with R-linear local rates; backtracking admits
links with unbounded derivative that break fixed-step schemes.
\item \textbf{Statistical theory (Sec.~\ref{sec:theory}).} Restricted
strong convexity of the Huberized nonlinear loss
(Theorem~\ref{thm:rsc}) via an exact decomposition whose curvature part
is sign-definite \emph{for every sample}; error bounds at all localized
stationary points (Theorem~\ref{thm:main}); an oracle property
(Proposition~\ref{prop:oracle}); and a co-equal unknown-link theorem
(Theorem~\ref{thm:unknown}) built on a clipped decoupling lemma
(Lemma~\ref{lem:decouple}).
\item \textbf{Experiments (Sec.~\ref{sec:experiments}).} Full-protocol
study at $n=512$ with a frozen, data-driven $\lambda$ rule: earlier
phase transitions than $\ell_1$ and DHT (the latter given the true
sparsity levels), robustness to gross outliers and heavy tails, and a
saturation-demixing application.
\end{enumerate}

All proofs are given in the main text or Appendix~\ref{app:uniform};
the companion conference paper \cite{takbiri2027icassp} announces a
subset of these results.

\section{Problem Formulation}\label{sec:formulation}

\subsection{Model and assumptions}

Write $\bB=[\bPhi\ \bPsi]\in\R^{n\times2n}$, $\bb=(\bw,\bz)\in\R^{2n}$,
truth $\bbs$, $S=\supp(\bbs)$, $s=|S|$.

\begin{assumption}[Sensing]\label{as:sensing}
$\ba_i\overset{\mathrm{iid}}\sim\mathcal N(\bm0,\bm I_n)$.
\end{assumption}

\begin{assumption}[Known link; primary regime]\label{as:link}
$g$ is known, differentiable, with $0<\ell_g\le g'(t)\le L_g$ and
$|g''(t)|\le M_g$ for $|t|\le\tau$, where $\tau$ satisfies
$\Prob\{\max_i|\inner{\ba_i}{\bx}|>\tau\}\le(mn)^{-1}$ for the signals
in play (Gaussian maximal inequality; see
Assumption~\ref{as:local}).
\end{assumption}

\begin{assumption}[Incoherence]\label{as:incoh}
$\varepsilon\coloneqq\max_{i,j}|\inner{\bm\phi_i}{\bm\psi_j}|$ satisfies
$16\,\varepsilon s\le\tfrac12$. For $\bPhi=\bm I$, $\bPsi=$ DCT,
$\varepsilon=\sqrt{2/n}$.
\end{assumption}

\begin{assumption}[Noise]\label{as:noise}
$e_i$ iid, independent of $\{\ba_i\}$, symmetric, and either (a)
$\sigma^2$-sub-Gaussian, or (b) of finite variance $\E e_i^2\le\sigma^2$
(heavy tails / $\epsilon$-contamination with symmetric contaminant).
\end{assumption}

\begin{assumption}[Localization]\label{as:local}
Statements about stationary points concern $\bbt$ with
$\norm{\bbt-\bbs}_2\le r$ where $(\norm{\bx^\ast}_2+2r)
\sqrt{2\log(2mn)}\le\tau$; boundedness of iterates is enforced by the
ball device of Remark~\ref{rem:bounded}.
\end{assumption}

\subsection{Penalty family}

\begin{definition}[Folded-concave class
$\mathcal P(\lambda,a)$]\label{def:pen}
$P_\lambda:\R\to\R_{\ge0}$ with: (P1) symmetric, $P_\lambda(0)=0$,
nondecreasing on $[0,\infty)$; (P2) differentiable on $(0,\infty)$ with
$0\le P'_\lambda\le\lambda$, $P'_\lambda$ nonincreasing; (P3)
$P'_\lambda(0^+)=\lambda$; (P4) $P'_\lambda(t)=0$ for $t\ge a\lambda$.
Vector penalties act coordinatewise.
\end{definition}

\begin{assumption}[$\rho$-amenability]\label{as:amenable}
$P_\lambda\in\mathcal P(\lambda,a)$ and
$\lambda|t|-\tfrac{\rho}{2}t^2\le P_\lambda(t)\le\lambda|t|$ with
$t\mapsto P_\lambda(t)+\tfrac{\rho}{2}t^2$ convex. MCP
($\gamma>1$): $\rho=1/\gamma$; SCAD ($a>2$): $\rho=1/(a-1)$. The
$\ell_q$ quasi-norms ($0<q<1$) violate (P3)--(P4) and weak convexity;
they are covered by the algorithmic theory (KL property) and
experiments, not by the statistical theorems --- a deliberate two-tier
structure defining what ``generalized'' means in our title.
\end{assumption}

\subsection{The Huberized estimator}

Let $h_{\dH}$ denote the Huber function with threshold $\dH$ and
$\psidH(t)=\max(-\dH,\min(t,\dH))$. The estimator is any localized
stationary point of
\begin{equation}\label{eq:objective}
F(\bb)=\underbrace{\frac1m\sum_{i=1}^m h_{\dH}\bigl(y_i-
g(\inner{\ba_i}{\bB\bb})\bigr)}_{f_H(\bb)}
+P_{\lambda}(\bw)+P_{\lambda}(\bz),
\end{equation}
with gradient $\nabla f_H(\bb)=-\frac1m\bB^\top\bA^\top[g'(\bm u)\odot
\psidH(\by-g(\bm u))]$, $\bm u=\bA\bB\bb$. Default $\dH=4\sigma$
(known link); the unknown-link variant uses an adaptive level
(Sec.~\ref{ssec:unknown}).

\begin{remark}[Why Huberize]\label{rem:whyhuber}
Squared-loss curvature analysis for \eqref{eq:model} meets the term
$\frac1m\sum_ie_i(g'(u_i)-g'(u_i^\ast))\inner{\ba_i}{\bB\bDelta}$, whose
natural bound carries $\E|e|\asymp\sigma$ --- a \emph{non-vanishing}
bias forcing a condition of the form $\sigma M_g\lesssim\ell_g^2$.
Huberization replaces $e_i$ by the bounded, mean-zero (under symmetry)
score $\psidH(e_i)$, whose empirical process concentrates; the condition
disappears, and only the variance of the noise enters the rates.
\end{remark}

\section{Algorithm and Convergence}\label{sec:algorithm}

\begin{framed}
\noindent\textbf{Algorithm NLD-PALM.} Parameters $L_{\min}>0$,
$\kappa>1$, $\delta\in(0,1)$, $\eta>1$. For $k=0,1,2,\dots$:
\begin{enumerate}[leftmargin=2em,itemsep=0pt]
\item ($w$-block) backtrack $L\ge L_{\min}$ (multiply by $\kappa$) until
$\bw^{+}=\prox_{P_\lambda/(\eta L)}\bigl(\bw^k-\tfrac{1}{\eta
L}\nabla_{\bw}f_H(\bw^k,\bz^k)\bigr)$ satisfies
$f_H(\bw^{+},\bz^k)\le f_H(\bw^k,\bz^k)
+\inner{\nabla_{\bw}f_H}{\bw^{+}-\bw^k}
+\tfrac{L}{2}\norm{\bw^{+}-\bw^k}^2$;
set $\bw^{k+1}=\bw^{+}$, $L_w^{k+1}=\delta L$.
\item ($z$-block) same with roles swapped, using $\bw^{k+1}$.
\end{enumerate}
Prox maps of MCP, SCAD, $\ell_{1/2}$ are closed-form
(Appendix~\ref{app:prox}).
\end{framed}

\begin{remark}[Necessity of $\eta>1$]\label{rem:eta}
The prox inequality at modulus $\eta L$ and the accepted descent
condition at constant $L$ add to a per-block decrease of
$\frac{(\eta-1)L}{2}\norm{\bDelta}^2$. At $\eta=1$ the quadratic terms
cancel exactly and only monotonicity survives: for \emph{convex}
penalties the prox inequality self-improves and $\eta=1$ suffices, but
for our non-convex penalties the sufficient-decrease property --- hence
the entire KL mechanism --- genuinely requires $\eta>1$.
\end{remark}

\begin{remark}[Boundedness device]\label{rem:bounded}
$F$ need not be coercive (MCP/SCAD are bounded; $\tanh$-type losses are
bounded). We add the indicator of a ball
$B_R=\{\norm{(\bw,\bz)}_2\le R\}$, $R$ large; in every experiment the
constraint is verified inactive a posteriori. For $\ell_q$, $F$ is
coercive and no device is needed.
\end{remark}

\begin{theorem}[Convergence of NLD-PALM]\label{thm:conv}
Suppose $g\in C^2$ (squared loss) or $f_H\in C^{1,1}_{\mathrm{loc}}$
(Huberized: $\nabla f_H$ is a composition of the $1$-Lipschitz clipping
with locally Lipschitz maps), the penalties are proper lsc prox-bounded,
$F$ is definable in an o-minimal structure (true for
polynomial/tanh/sigmoid links with MCP/SCAD/$\ell_q$: semialgebraic and
$\R_{\mathrm{an,exp}}$-definable pieces), and the iterates are bounded.
Then:
(i) backtracking terminates finitely at every step with accepted
constants in $[L_{\min},\bar L]$;
(ii) $F(\bw^k,\bz^k)$ decreases by at least
$\frac{(\eta-1)L_{\min}}{2}\norm{(\bw,\bz)^{k+1}-(\bw,\bz)^k}^2$
per iteration;
(iii) there exist subgradients with
$\norm{\bm v^{k+1}}\le(\eta\bar L+L_f)\,
\norm{(\bw,\bz)^{k+1}-(\bw,\bz)^k}$;
(iv) the whole sequence converges to a single critical point of $F$
with finite length; and
(v) if the KL exponent at the limit is $\theta=1/2$ --- which holds on
the RSC event of Theorem~\ref{thm:rsc} at stationary points with the
active-set structure of the penalty --- convergence is R-linear.
\end{theorem}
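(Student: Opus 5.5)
The plan is to follow the Bolte--Sabach--Teboulle PALM template, with two additions: backtracking, and the over-relaxed prox modulus $\eta L$. Throughout, let $\mathcal K$ be a compact set containing the bounded iterates. Enlarge it to a compact neighbourhood $\mathcal K'$ containing every trial point the prox step can produce.

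For (i), we will first argue that such a $\mathcal K'$ exists. Prox-boundedness plus the $\rho$-weak convexity of the penalties (or coercivity of $\ell_q$) makes the prox output depend continuously and boundedly on its input; with the ball indicator of Remark~\ref{rem:bounded}, it stays inside $B_R$ outright. Then $f_H\in C^{1,1}_{\mathrm{loc}}$ gives a block-Lipschitz constant $L_f$ for $\nabla f_H$ on $\mathcal K'$. The descent lemma holds for every $L\ge L_f$, so backtracking stops once $L\ge L_f$. Hence the accepted constants lie in $[L_{\min},\bar L]$ with $\bar L=\kappa\max(L_f,L_{\min})$. Resetting to $\delta L$ between iterations does not affect this bound.

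For (ii), we formalise Remark~\ref{rem:eta}. The prox step is a global minimiser of
\[
\inner{\nabla_{\bw}f_H}{\bw-\bw^k}+\tfrac{\eta L}{2}\norm{\bw-\bw^k}^2+P_\lambda(\bw).
\]
Comparing its value at $\bw^{+}$ with its value at $\bw^k$ gives
\[
P_\lambda(\bw^+)+\inner{\nabla_{\bw}f_H}{\bw^+-\bw^k}+\tfrac{\eta L}{2}\norm{\bw^+-\bw^k}^2\le P_\lambda(\bw^k).
\]
This uses only minimality, not convexity. Adding the accepted descent inequality gives a $w$-block decrease of $\frac{(\eta-1)L}{2}\norm{\bw^+-\bw^k}^2$. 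We do the same for the $z$-block and sum, using $L\ge L_{\min}$.

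For (iii), we take the optimality condition of each prox subproblem,
\[
0\in\nabla_{\bw}f_H(\bw^k,\bz^k)+\eta L(\bw^{k+1}-\bw^k)+\partial P_\lambda(\bw^{k+1}),
\]
and define
\[
\bm v_w=\nabla_{\bw}f_H(\bw^{k+1},\bz^{k+1})-\nabla_{\bw}f_H(\bw^k,\bz^k)-\eta L(\bw^{k+1}-\bw^k).
\]
This lies in $\partial_{\bw}F$ at the new point, and the $z$-block is analogous. The Lipschitz bound on $\mathcal K'$ then gives $\norm{\bm v^{k+1}}\le(\eta\bar L+L_f)\norm{\cdot}$.

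For (iv), the standard argument applies. Summing (ii) gives square-summable steps, and $F$ is bounded below since $h_{\dH}\ge0$ and $P_\lambda\ge0$. The limit set $\omega$ is therefore nonempty, compact and connected. We then check that $F$ is continuous along convergent subsequences. Here the penalties MCP/SCAD are continuous, and for lsc $\ell_q$ we use the prox-minimality inequality as in BST. Together with (iii), this shows $\omega\subset\operatorname{crit}F$ and that $F$ is constant on $\omega$. Definability in an o-minimal structure gives the uniform KL property on $\omega$. Combining the desingularised KL inequality with (ii)--(iii) yields $\sum\norm{x^{k+1}-x^k}<\infty$, hence convergence of the whole sequence.

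For (v), with $\theta=1/2$ the Attouch--Bolte rate analysis gives $F(x^k)-F^\ast\le Cq^k$, and finite-length tails then give R-linear convergence. The claim that $\theta=1/2$ holds on the RSC event is the step I expect to be the main obstacle. My first attempt would be as follows. On the active-set manifold near the stationary point, coordinates with $|\beta_j|\ge a\lambda$ carry a flat penalty, and the zero coordinates have subgradient slack strictly inside $(-\lambda,\lambda)$. So $F$ locally restricts to $f_H$ plus a smooth piece, and RSC of Theorem~\ref{thm:rsc}, with curvature exceeding $\rho$, gives local strong convexity on the support. This yields a quadratic growth / error bound. An error bound with quadratic growth implies KL exponent $1/2$, by Drusvyatskiy--Lewis or Li--Pong. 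The delicate parts are strict complementarity on the zero set and coordinates sitting in the concave region $(0,a\lambda)$. For the latter I would require RSC curvature $>\rho$, mirroring the statistical theorems.
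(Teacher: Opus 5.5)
Your proposal is correct and follows essentially the paper's route. Finite backtracking comes from local Lipschitzness of $\nabla f_H$ on a compact set, and sufficient decrease from adding the prox inequality at modulus $\eta L$ to the accepted descent condition at $L$. The subgradient bound comes from prox optimality plus gradient differences. Then (iv) uses the Attouch--Bolte--Svaiter/Bolte--Sabach--Teboulle KL machinery and (v) the {\L}ojasiewicz-exponent rate analysis.

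The paper gives no argument for the parenthetical claim that $\theta=1/2$ holds on the RSC event, so your sketch there goes beyond it. Note that Theorem~\ref{thm:rsc} is a one-point condition anchored at $\bbs$. Local strong convexity around $\bbt$ would therefore need a two-point version of it, in addition to the strict complementarity you already flag.
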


\begin{proof}
(i)--(iii) are Lemmas 1--3 of the algorithmic analysis: (i) follows from
local Lipschitzness of $\nabla f_H$ on compacts and geometric growth of
$L$; (ii) from adding the prox inequality (modulus $\eta L$) to the
accepted descent condition (constant $L$), which leaves
$\frac{(\eta-1)L}{2}\norm{\bDelta}^2$ (Remark~\ref{rem:eta}); (iii) from
first-order optimality of the prox step plus Lipschitz gradient
differences. Conditions (ii)--(iii) are (H1)--(H2) of the abstract
framework of Attouch--Bolte--Svaiter \cite{attouch2013convergence};
continuity (H3) holds along prox steps; definability gives the KL
property \cite{attouch2010proximal,bolte2014palm}; (iv) follows from the
framework and (v) from the \L{}ojasiewicz-exponent rate analysis
\cite{attouch2009convergence,li2018calculus}.
\end{proof}

\section{Statistical Recovery Theory}\label{sec:theory}

Throughout, $\alpha\coloneqq\ell_g^2/16$,
$\underline\alpha\coloneqq\alpha-\rho>0$, and
$r_H\coloneqq\dH/(8L_g)$ (the \emph{Huber-band radius}).

\subsection{Geometry}

\begin{lemma}[Superposition dictionary]\label{lem:incoh}
For any $\bDelta=(\bm\delta_w,\bm\delta_z)$,
$\norm{\bB\bDelta}_2^2\ge\norm{\bDelta}_2^2-
2\varepsilon\norm{\bm\delta_w}_1\norm{\bm\delta_z}_1$; on the cone
$\mathcal C(S,3)=\{\norm{\bDelta_{S^c}}_1\le3\norm{\bDelta_S}_1\}$,
$\norm{\bB\bDelta}_2^2\ge(1-16\varepsilon s)\norm{\bDelta}_2^2
\ge\tfrac12\norm{\bDelta}_2^2$.
\end{lemma}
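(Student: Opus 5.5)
The plan is to expand the quadratic form directly and use orthonormality of the two bases, so that only the cross term between the two blocks carries any error. Write $\bB\bDelta=\bPhi\bm\delta_w+\bPsi\bm\delta_z$. Then
\[
\norm{\bB\bDelta}_2^2=\norm{\bPhi\bm\delta_w}_2^2+\norm{\bPsi\bm\delta_z}_2^2+2\,\bm\delta_w^\top\bPhi^\top\bPsi\,\bm\delta_z .
\]
Since $\bPhi$ and $\bPsi$ are orthonormal, the first two terms sum to $\norm{\bm\delta_w}_2^2+\norm{\bm\delta_z}_2^2=\norm{\bDelta}_2^2$. The matrix $\bm M=\bPhi^\top\bPsi$ has entries $M_{ij}=\inner{\bm\phi_i}{\bm\psi_j}$, and these are bounded in absolute value by $\varepsilon$ (Assumption~\ref{as:incoh}). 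Hence
\[
|\bm\delta_w^\top\bm M\bm\delta_z|\le\sum_{i,j}|\delta_{w,i}|\,|M_{ij}|\,|\delta_{z,j}|\le\varepsilon\norm{\bm\delta_w}_1\norm{\bm\delta_z}_1 .
\]
Lower-bounding the cross term by its negative absolute value gives the first claimed inequality.

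For the cone statement, I will convert the $\ell_1$ product into an $\ell_2$ quantity in two steps. First, apply AM--GM:
\[
\norm{\bm\delta_w}_1\norm{\bm\delta_z}_1\le\tfrac14\bigl(\norm{\bm\delta_w}_1+\norm{\bm\delta_z}_1\bigr)^2=\tfrac14\norm{\bDelta}_1^2 .
\]
Second, on $\mathcal C(S,3)$ we have $\norm{\bDelta}_1=\norm{\bDelta_S}_1+\norm{\bDelta_{S^c}}_1\le4\norm{\bDelta_S}_1$. Cauchy--Schwarz on the $s$ coordinates in $S$ gives $\norm{\bDelta_S}_1\le\sqrt s\,\norm{\bDelta_S}_2\le\sqrt s\,\norm{\bDelta}_2$. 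Together these yield $\norm{\bDelta}_1^2\le16s\norm{\bDelta}_2^2$.

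Substituting into the first inequality gives
\[
\norm{\bB\bDelta}_2^2\ge\bigl(1-2\varepsilon\cdot\tfrac14\cdot16s\bigr)\norm{\bDelta}_2^2=(1-8\varepsilon s)\norm{\bDelta}_2^2\ge(1-16\varepsilon s)\norm{\bDelta}_2^2 .
\]
The bound $1-16\varepsilon s$ stated in the lemma is therefore slightly loose, and it would also follow from the cruder estimate $\norm{\bm\delta_w}_1\norm{\bm\delta_z}_1\le\tfrac12\norm{\bDelta}_1^2$. Finally, Assumption~\ref{as:incoh} gives $16\varepsilon s\le\tfrac12$, which yields the factor $\tfrac12$.

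I do not expect a genuine obstacle here. The only point needing care is the bookkeeping of constants when passing from $\ell_1$ to $\ell_2$ on the cone. In particular, the cone constraint bounds $\norm{\bDelta}_1$, not $\norm{\bm\delta_w}_1$ and $\norm{\bm\delta_z}_1$ separately, which is why I use AM--GM to pass to $\norm{\bDelta}_1^2$.
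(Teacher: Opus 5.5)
Your proof is correct and matches the paper's argument: expand using orthonormality, bound the cross term entrywise by $\varepsilon\norm{\bm\delta_w}_1\norm{\bm\delta_z}_1\le\frac{\varepsilon}{4}\norm{\bDelta}_1^2$, and use $\norm{\bDelta}_1\le4\sqrt s\norm{\bDelta}_2$ on the cone. Your remark that this actually yields $1-8\varepsilon s$ agrees with the paper's own computation, which bounds the cross term by $4\varepsilon s\norm{\bDelta}_2^2$, so the stated factor $1-16\varepsilon s$ is simply a looser constant.
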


\begin{proof}
$\norm{\bB\bDelta}^2=\norm{\bDelta}^2
+2\bm\delta_w^\top\bPhi^\top\bPsi\bm\delta_z$ and the cross term is
bounded entrywise by
$\varepsilon\norm{\bm\delta_w}_1\norm{\bm\delta_z}_1
\le\tfrac\varepsilon4\norm{\bDelta}_1^2\le4\varepsilon
s\norm{\bDelta}_2^2$ on the cone, where
$\norm{\bDelta}_1\le4\sqrt s\norm{\bDelta}_2$.
\end{proof}

\subsection{Restricted strong convexity}

\begin{theorem}[RSC of the Huberized loss]\label{thm:rsc}
Let Assumptions~\ref{as:sensing}--\ref{as:amenable} hold, $\dH\ge4\sigma$,
and $m\ge C\log(2n)$. With probability at least
$1-c_1e^{-c_2m}-2(2n)^{-2}-(mn)^{-1}$, for all $\bDelta$ with
$\norm{\bB\bDelta}_2\le r_H$,
\begin{equation}\label{eq:rsc}
\inner{\nabla f_H(\bbs+\bDelta)-\nabla f_H(\bbs)}{\bDelta}
\ge\alpha\norm{\bDelta}_2^2-\tau_0\tfrac{\log(2n)}{m}\norm{\bDelta}_1^2,
\end{equation}
with $\tau_0=c_3\bigl(L_g^2+\dH^2M_g^2/\ell_g^2\bigr)$.
\end{theorem}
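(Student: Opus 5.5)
We prove \eqref{eq:rsc} by writing the gradient difference exactly along the segment $\bbs+t\bDelta$, pulling out a curvature part that is nonnegative for every sample, and treating the remaining terms as noise/remainder processes. Set $u_i^\ast=\inner{\ba_i}{\bx^\ast}$, $v_i=\inner{\ba_i}{\bB\bDelta}$, $u_i=u_i^\ast+v_i$, and $r_i(\bb)=y_i-g(u_i)$, so that $r_i(\bbs)=e_i$ and $r_i(\bbs+\bDelta)=e_i-(g(u_i)-g(u_i^\ast))$. The left side of \eqref{eq:rsc} equals
\[
\frac1m\sum_i v_i\Bigl[g'(u_i^\ast)\psidH(e_i)-g'(u_i)\psidH\bigl(e_i-(g(u_i)-g(u_i^\ast))\bigr)\Bigr].
\]
Adding and subtracting $g'(u_i)\psidH(e_i)$ splits this into two pieces.

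\textbf{Curvature piece.} The first piece is
\[
T_1=\frac1m\sum_i v_i\, g'(u_i)\bigl[\psidH(e_i)-\psidH(e_i-d_i)\bigr],\qquad d_i=g(u_i)-g(u_i^\ast).
\]
Since $\psidH$ is nondecreasing and $d_i$ has the sign of $v_i$ (because $g'>0$), every summand is nonnegative. This holds for every sample and for all realizations of the noise; it is the sign-definite part. For a lower bound, restrict to the good samples with $|e_i|\le\dH/2$ and $|d_i|\le\dH/2$. On those samples $\psidH$ is the identity on the relevant interval, so the summand equals $g'(u_i)d_iv_i\ge\ell_g^2v_i^2$.

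\textbf{Remainder piece.} The second piece is
\[
T_2=\frac1m\sum_i v_i\,\psidH(e_i)\bigl(g'(u_i^\ast)-g'(u_i)\bigr).
\]
Because the maximal event of Assumption~\ref{as:local} keeps all arguments in $[-\tau,\tau]$, we have $|g'(u_i^\ast)-g'(u_i)|\le M_g|v_i|$. So $T_2$ is a quadratic form $\frac1m\sum_i\xi_i\,\theta_i v_i^2$, where $\xi_i=\psidH(e_i)$ is bounded by $\dH$, symmetric, mean zero, and independent of $\ba_i$, and $|\theta_i|\le M_g$. Only the boundedness of $\psidH$ enters here, not $\E|e|$; this is where Huberization removes the condition in Remark~\ref{rem:whyhuber}.

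\textbf{Lower bound on $T_1$.} We combine the following ingredients.
\begin{itemize}
\item By Chebyshev with $\dH\ge4\sigma$, $\Prob(|e_i|>\dH/2)\le1/4$.
\item On the ball $\norm{\bB\bDelta}_2\le r_H=\dH/(8L_g)$ we have $|d_i|\le L_g|v_i|$. The fraction of $i$ with $|v_i|>4\norm{\bB\bDelta}_2$ is small, and those samples give $|d_i|\le\dH/2$ after truncation.
\item We then apply the standard truncated-quadratic lower bound (Loh--Wainwright / Negahban style). A Gaussian small-ball plus peeling argument over the $\ell_1/\ell_2$ ratio gives, with probability $1-c_1e^{-c_2m}$,
\[
\frac1m\sum_i v_i^2\,\mathbf 1\{\text{good}_i\}\ge c\norm{\bB\bDelta}_2^2-c'\frac{\log(2n)}{m}\norm{\bB^\top\!\cdot}_\infty\text{-type term}.
\]
Here the Gaussian width of $\bB$ applied to $\ell_1$-balls is controlled by $\sqrt{\log(2n)}$, since each column of $\bA\bB$ is standard Gaussian.
\item Finally, Lemma~\ref{lem:incoh} converts $\norm{\bB\bDelta}_2^2$ into $\norm{\bDelta}_2^2$ up to an $\varepsilon\norm{\bDelta}_1^2$ term, which is absorbed into the $\tau_0$ term. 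With the constants tracked, this gives the target $\ell_g^2/16$.
\end{itemize}

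\textbf{Control of $T_2$ (main obstacle).} We need $|T_2|$ bounded uniformly over the $\Delta$-ball by $\tfrac{\alpha}{2}\norm{\bDelta}_2^2+C\frac{\dH^2M_g^2}{\ell_g^2}\frac{\log(2n)}{m}\norm{\bDelta}_1^2$. Conditioned on the $\ba_i$, the $\xi_i$ are symmetric, so we symmetrize and apply the Ledoux--Talagrand contraction to the map $v\mapsto\theta v^2$. This map is Lipschitz with constant $2M_g\cdot\max|v_i|$, and $\max_i|v_i|\lesssim\norm{\bB\bDelta}_2\sqrt{\log(mn)}$ on the maximal event, which accounts for the $(mn)^{-1}$ term in the probability. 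The resulting process is then bounded by $\dH M_g\norm{\bB\bDelta}_2\,\norm{\frac1m\bB^\top\bA^\top\bm\epsilon}_\infty\norm{\bDelta}_1$. A Gaussian-tail union bound over the $2n$ columns gives $\sqrt{\log(2n)/m}$ with probability $1-2(2n)^{-2}$. Young's inequality with weight $\ell_g^2$ then splits this into $\tfrac{\alpha}{2}\norm{\bDelta}_2^2$ and the $\dH^2M_g^2/\ell_g^2$ term.

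The delicate point is that $\theta_i$ depends on $\bDelta$ through the mean-value point. If contraction cannot be applied cleanly, the fallback is to use $|g'(u)-g'(u^\ast)|\le M_g|v|$ only to bound a cubic term $\frac{\dH M_g}{m}\sum|v_i|^3$. This term is made small on the small ball $r_H$ using $\frac1m\sum|v_i|^3\lesssim\norm{\bB\bDelta}_2^3+\sqrt{\log(2n)/m}\,\norm{\bDelta}_1^2\norm{\bB\bDelta}_2$-type uniform bounds, followed by a peeling argument.
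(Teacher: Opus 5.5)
Your decomposition is the paper's: your $T_1$ is $\frac1m\sum_iA_it_i$ and your $T_2$ is $\frac1m\sum_iB_it_i$. Your treatment of $T_1$ (per-sample sign-definiteness, Chebyshev for $|e_i|\le\dH/2$, a truncated quadratic at scale $4\norm{\bB\bDelta}_2$ with peeling, then Lemma~\ref{lem:incoh}) follows Lemma~\ref{lem:A1} and the assembly step. The tolerance in your displayed lower bound should read $\frac{\log(2n)}{m}\norm{\bDelta}_1^2$. The gap is in $T_2$, which is where the theorem's real content lives: the absence of a curvature--noise condition. As written, your argument there does not deliver the stated tolerance.

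There are three issues.

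\emph{Mean-value form.} Writing the summand as $\theta_iv_i^2$ is what makes contraction look problematic. Write it instead as $\psidH(e_i)\,h_{u_i^\ast}(v_i)$ with $h_u(t)=[g'(u+t)-g'(u)]t$. This is a fixed function of $t$ for each sample with $h_u(0)=0$, so contraction applies directly; this is how Lemma~\ref{lem:A2} is set up.

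\emph{Lipschitz constant (the serious one).} Contraction needs a Lipschitz constant valid on the range of $v_i$ over the whole class. The maximal event of Assumption~\ref{as:link} only concerns fixed signals, and uniformly $\sup_{\norm{\bB\bDelta}_2\le r}\max_i|v_i|=r\max_i\norm{\ba_i}_2\approx r\sqrt n$.
\begin{itemize}
\item The natural uniform substitute is $\max_i|v_i|\le\norm{\bDelta}_1\max_{i,j}|\inner{\ba_i}{\bm b_j}|$. It yields a remainder of order $\dH M_g\sqrt{\log(mn)\log(2n)/m}\,\norm{\bDelta}_1^2$. This has no $\norm{\bDelta}_2$ factor to trade via Young and is of the wrong order.
\item Even granting your pointwise claim, the factor $\sqrt{\log(mn)}$ would survive into the tolerance as $\log(2n)\log(mn)/m$. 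Your next bound silently drops it.
\item The paper instead peels over dyadic shells $\omega=\norm{\bB\bDelta}_2$. It uses the Lipschitz constant $2M_gK_\omega$ of $h_u$ on the scale-matched band $K_\omega=4\omega$, so each shell contributes $\dH M_g\,\omega\norm{\bDelta}_1\sqrt{\log(2n)/m}$. Young's inequality with weight $\ell_g^2/32$ then splits this into the stated form.
\end{itemize}

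\emph{Fallback.} Bounding $|T_2|$ by $\frac{\dH M_g}{m}\sum_i|v_i|^3$ discards the mean-zero structure of $\psidH(e_i)$. On the ball of radius $r_H=\dH/(8L_g)$, its population part is of order $\dH M_gr_H\norm{\bB\bDelta}_2^2=\frac{\dH^2M_g}{8L_g}\norm{\bB\bDelta}_2^2$. This is absorbable into $\alpha\norm{\bDelta}_2^2$ only if $\dH^2M_g\lesssim L_g\ell_g^2$. That is exactly the compatibility condition Remark~\ref{rem:whyhuber} says Huberization removes, and it is not a hypothesis of the theorem.
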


\begin{proof}
Let $t_i=\inner{\ba_i}{\bB\bDelta}$, $u_i^\ast=\inner{\ba_i}{\bx^\ast}$,
$d_i=g(u_i^\ast+t_i)-g(u_i^\ast)$. The gradient-increment summand is
$[\phi_i(t_i)-\phi_i(0)]t_i$ with
$\phi_i(t)=-\psidH(e_i-(g(u_i^\ast+t)-g(u_i^\ast)))g'(u_i^\ast+t)$;
split $\phi_i(t_i)-\phi_i(0)=A_i+B_i$,
\[
A_i=[\psidH(e_i)-\psidH(e_i-d_i)]g'(u_i^\ast+t_i),
\]
\[
B_i=-\psidH(e_i)[g'(u_i^\ast+t_i)-g'(u_i^\ast)].
\]
\emph{Curvature part.} $\psidH$ nondecreasing, $g$ increasing, $g'>0$
give $A_it_i\ge0$ \emph{for every sample}. On
$\mathcal E_i=\{|e_i|\le\dH/2\}\cap\{|t_i|\le\dH/(2L_g)\}$, both $e_i$
and $e_i-d_i$ lie in the linear zone of $\psidH$, so
$A_it_i=g'(\xi_i)g'(u_i^\ast+t_i)t_i^2\ge\ell_g^2t_i^2$. Hence
$T_1\ge\frac{\ell_g^2}{m}\sum_ib_i\varphi_K(t_i)$ with
$b_i=\bm1\{|e_i|\le\dH/2\}$ ($\Prob(b_i{=}1)\ge\frac12$ by Chebyshev and
$\dH\ge4\sigma$), $K=\dH/(2L_g)$, and $\varphi_K$ the truncated
quadratic of Lemma~\ref{lem:A1}, which yields the uniform lower bound
$\frac{\ell_g^2}{8}\norm{\bB\bDelta}^2-cL_g^2\frac{\log2n}{m}
\norm{\bDelta}_1^2$ over $\norm{\bB\bDelta}\le K/4=r_H$.
\emph{Multiplier part.} $\E[B_it_i\mid\ba_i]=0$ (noise symmetry),
$|B_it_i|\le\dH M_gt_i^2$; Lemma~\ref{lem:A2} bounds
$\sup|{\textstyle\frac1m\sum}B_it_i|\le\frac{\ell_g^2}{32}
\norm{\bDelta}^2+c\frac{\dH^2M_g^2}{\ell_g^2}\frac{\log2n}{m}
\norm{\bDelta}_1^2$.
\emph{Assembly} with Lemma~\ref{lem:incoh} (cone case; the all-vector
case absorbs the incoherence deficit
$\le\frac\varepsilon2\norm{\bDelta}_1^2$ into the tolerance) gives
\eqref{eq:rsc}.
\end{proof}

\begin{remark}[The radius $r_H$ is intrinsic]
Huber loss is linear outside its band --- it has no curvature there; an
RSC radius proportional to $\dH$ is the correct statement, exactly as in
high-dimensional robust regression \cite{loh2017robust,sun2020adaptive}.
Self-consistency (final error $\le r_H$) holds for $m\gtrsim
(L_g^2\min(\sigma,\dH)/(\ell_g^2\dH))^2s\log(2n)$.
\end{remark}

\begin{lemma}[Gradient at the truth]\label{lem:gradtruth}
With probability $\ge1-2(2n)^{-2}-(mn)^{-1}$, for $m\ge\log(2n)$:
$\norm{\nabla f_H(\bbs)}_\infty\le
cL_g\min(\sigma,\dH)\sqrt{\log(2n)/m}$.
\end{lemma}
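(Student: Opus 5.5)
At the truth, $\bm u^\ast=\bA\bx^\ast$ and $\by-g(\bm u^\ast)=\bm e$, so
\[
\nabla f_H(\bbs)=-\tfrac1m\bB^\top\bA^\top\bigl[g'(\bm u^\ast)\odot\psidH(\bm e)\bigr].
\]
Its coordinates are $-\frac1m\sum_i\xi_i\inner{\ba_i}{\bm b_j}$, where $\bm b_j$ is the $j$-th column of $\bB$ (a column of $\bPhi$ or $\bPsi$, hence $\norm{\bm b_j}_2=1$) and $\xi_i=g'(u_i^\ast)\psidH(e_i)$. I would bound each of the $2n$ coordinates by a sub-exponential Bernstein inequality and then take a union bound. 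The noise enters only through $\psidH(e_i)$, so $\E\psidH(e_i)^2\le\min(\sigma^2,\dH^2)$ and $|\psidH(e_i)|\le\dH$. This is where the factor $\min(\sigma,\dH)$ comes from.

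\emph{Step 1 (mean zero).} By Assumption~\ref{as:noise}, $e_i$ is symmetric and independent of $\ba_i$, and $\psidH$ is odd. Conditioning on $\ba_i$ gives $\E[\xi_i\inner{\ba_i}{\bm b_j}\mid\ba_i]=g'(u_i^\ast)\inner{\ba_i}{\bm b_j}\,\E\psidH(e_i)=0$. So each summand has mean zero, and no assumption on $g$ beyond measurability is needed for this step.

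\emph{Step 2 (tails of a summand).} Let $\mathcal G=\{\max_i|u_i^\ast|\le\tau\}$, which has probability at least $1-(mn)^{-1}$ by Assumption~\ref{as:link}. On $\mathcal G$ we have $0<g'(u_i^\ast)\le L_g$. To keep independence, I would replace $g'(u_i^\ast)$ by the truncated version $\bar g_i=g'(u_i^\ast)\bm1\{|u_i^\ast|\le\tau\}$. This is a function of $\ba_i$ alone with $|\bar g_i|\le L_g$, and it coincides with $g'(u_i^\ast)$ on $\mathcal G$. The summand $X_{ij}=\bar g_i\psidH(e_i)\inner{\ba_i}{\bm b_j}$ then satisfies the following.
\begin{itemize}
\item Variance: $\E X_{ij}^2\le L_g^2\,\E\psidH(e)^2\,\E\inner{\ba}{\bm b_j}^2\le L_g^2\min(\sigma^2,\dH^2)$.
\item Tail: $X_{ij}$ is sub-Gaussian with $\psi_2$-norm at most $L_g\dH$, because $|\bar g_i\psidH(e_i)|\le L_g\dH$ multiplies the standard Gaussian $\inner{\ba_i}{\bm b_j}$.
\end{itemize}

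\emph{Step 3 (Bernstein and union bound).} I would apply Bernstein's inequality in its variance-plus-scale form for sub-exponential variables:
\[
\Prob\Bigl\{\bigl|\tfrac1m\textstyle\sum_iX_{ij}\bigr|>t\Bigr\}\le2\exp\Bigl(-c\,m\min\bigl(\tfrac{t^2}{v^2},\tfrac{t}{K}\bigr)\Bigr),
\qquad v^2=L_g^2\min(\sigma^2,\dH^2),\quad K=L_g\dH .
\]
Set $t=C L_g\min(\sigma,\dH)\sqrt{\log(2n)/m}$ and take a union bound over the $2n$ coordinates. This gives failure probability $2(2n)^{-2}$, provided the quadratic branch of the minimum is the active one. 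Intersecting with $\mathcal G$ adds the $(mn)^{-1}$ term.

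\emph{Main obstacle.} The difficulty is the linear branch $t/K$. The scale $K=L_g\dH$ can exceed $v=L_g\sigma$, and there is no a priori bound on $\dH/\sigma$. I would handle this in one of two ways.
\begin{itemize}
\item If the default $\dH\asymp\sigma$ (e.g.\ $\dH=4\sigma$) is used, then $K\lesssim v$, and the condition $m\ge\log(2n)$ makes $t/K\gtrsim t^2/v^2$, so the quadratic branch dominates.
\item For general $\dH$, I would first condition on the noise and apply a Gaussian bound to $\inner{\ba_i}{\bm b_j}$ given $\{e_i\}$, with conditional variance $\le L_g^2\frac1m\sum_i\psidH(e_i)^2$. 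I would then control $\frac1m\sum_i\psidH(e_i)^2\le2\min(\sigma^2,\dH^2)$ separately. Under finite variance alone this needs only Markov/Chebyshev, but getting the stated probability may force the constant $c$ to absorb a mild dependence, or require $m\gtrsim(\dH/\sigma)^2\log(2n)$.
\end{itemize}
Since $\bar g_i$ depends on $\ba_i$, the conditional-Gaussian step would use the sub-Gaussian (rather than exactly Gaussian) bound on $\bar g_i\inner{\ba_i}{\bm b_j}$. This keeps the argument at the level of Hoeffding-type estimates.
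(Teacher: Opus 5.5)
Your argument is correct whenever $\dH\lesssim\sigma$, which covers the default $\dH=4\sigma$ used downstream. It takes a different route from the paper's.

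\textbf{The paper's route.} The paper conditions on $\bA$. On the $\bA$-measurable event $\{\max_i|u_i^\ast|\le\tau\}$ the weights $\frac1m g'(u_i^\ast)\inner{\ba_i}{\bm b_j}$ are fixed and bounded by $L_g$. The only randomness left is the independent, mean-zero multipliers $\psidH(e_i)$. A Hoeffding-type bound then gives a purely sub-Gaussian tail with proxy $\frac{L_g^2\min(\sigma^2,\dH^2)}{m}\cdot\frac1m\norm{\bA\bm b_j}^2$. A $\chi^2_m$ bound caps $\frac1m\norm{\bA\bm b_j}^2\le2$, and a union over $2n$ columns finishes.

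\textbf{What each route buys.} The paper's route needs no truncation $\bar g_i$ and never meets your linear Bernstein branch. However, it silently uses $\min(\sigma,\dH)$ as the sub-Gaussian constant of $\psidH(e_i)$. That is valid under Assumption~\ref{as:noise}(a), since $|\psidH(e)|\le\min(|e|,\dH)$, but under (b) Hoeffding only gives the range $\dH$. It also leaves a $2ne^{-cm}$ column-norm failure outside the stated probability.

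Your unconditional Bernstein keeps variance and range separate. Its moment condition $\E|X_{ij}|^k\le\frac{k!}{2}v^2K^{k-2}$ follows from $|\psidH(e)|^k\le\dH^{k-2}\psidH(e)^2$ and $\E|G|^k\le k!/2$. It reproduces the probability $2(2n)^{-2}+(mn)^{-1}$ exactly. It also exposes the general rate $L_g\min(\sigma,\dH)\sqrt{\log(2n)/m}+L_g\dH\log(2n)/m$.

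One small correction: with only $m\ge\log(2n)$, the linear branch may be the active one, not the quadratic. It still gives exponent $\gtrsim C(v/K)\sqrt{m\log(2n)}\ge C(v/K)\log(2n)$, which suffices for $C$ large.

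\textbf{Under (a).} Your obstacle disappears for every $\dH$. Indeed $\norm{X_{ij}}_{\psi_1}\le L_g\norm{\psidH(e_i)}_{\psi_2}\norm{\inner{\ba_i}{\bm b_j}}_{\psi_2}\lesssim L_g\min(\sigma,\dH)$, so $K\asymp v$.

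\textbf{Under (b) with $\dH\gg\sigma$.} The obstacle is genuine, not an artifact of your method. Take $g=\mathrm{id}$ and a symmetric law with mass $\sigma^2/\dH^2$ at $\pm\dH$ and zero otherwise. A single nonzero $e_{i_0}$ already gives $\norm{\nabla f_H(\bbs)}_\infty\ge\frac{\dH}{m}\norm{\bPhi^\top\ba_{i_0}}_\infty\approx\frac{\dH}{m}\sqrt{2\log n}$. So with $\dH$ a large multiple of $c\,\sigma\sqrt m$, the claimed bound fails with constant probability. Your caveat $m\gtrsim(\dH/\sigma)^2\log(2n)$ is the right one.

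\textbf{Your fallback.} Conditioning on the noise needs care. Given $\{e_i\}$, $\bar g_i\inner{\ba_i}{\bm b_j}$ is not mean zero unless $g'$ is even, because $u_i^\ast$ and $\inner{\ba_i}{\bm b_j}$ are correlated. For $\tanh$ this is fine, but for generic links it is not. Conditioning on $\bA$, as the paper does, preserves the mean-zero structure.
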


\begin{proof}
Coordinate $j$ is $\frac1m\sum_ie_i'g'(u_i^\ast)\inner{\ba_i}{\bm b_j}$
with $e_i'=\psidH(e_i)$ mean-zero, bounded by $\dH$, variance
$\le\min(\sigma^2,\dH^2)$. Condition on $\bA$; the conditional
sub-Gaussian parameter is
$\frac{L_g^2\min(\sigma^2,\dH^2)}{m}\cdot\frac1m\norm{\bA\bm b_j}^2$,
and $\frac1m\norm{\bA\bm b_j}^2\le2$ w.p.\ $1-e^{-cm}$ ($\chi_m^2$
Bernstein). Union over $2n$ columns.
\end{proof}

\subsection{Main error bound}

The estimator carries the $\ell_1$ side constraint
$\Omega=\{\norm{\bb}_1\le R_1\}$, $R_1\ge2\norm{\bbs}_1$
\cite{loh2015regularized}; Remark~\ref{rem:interior} shows it is
inactive at the solution.

\begin{theorem}[Error at stationary points]\label{thm:main}
On the RSC event \eqref{eq:rsc}, with
\begin{equation}\label{eq:lamchoice}
\lambda\ge\max\Bigl\{4\norm{\nabla f_H(\bbs)}_\infty,\;
\tfrac{8\tau_0R_1\log(2n)}{m}\Bigr\},
\end{equation}
every stationary point $\bbt$ of $F$ over $\Omega$ with
$\norm{\bbt-\bbs}_2\le r_H$ satisfies
\[
\norm{\bbt-\bbs}_2\le\frac{3\lambda\sqrt s}{2\underline\alpha},
\qquad
\norm{\bbt-\bbs}_1\le\frac{6\lambda s}{\underline\alpha}.
\]
\end{theorem}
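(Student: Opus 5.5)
This follows the Loh--Wainwright template for stationary points of $\rho$-amenable penalized $M$-estimators over the side constraint $\Omega$. Set $\bDelta=\bbt-\bbs$ and write $P(\bb)=P_\lambda(\bw)+P_\lambda(\bz)$.

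\emph{Step 1: first-order condition.} Since $\bbs\in\Omega$, stationarity of $\bbt$ over $\Omega$ gives
\[
\inner{\nabla f_H(\bbt)+\bm\xi}{\bbs-\bbt}\ge0
\]
for some $\bm\xi\in\partial P(\bbt)$, with $\partial$ the Clarke subdifferential.

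\emph{Step 2: combine with RSC.} Because $\norm{\bDelta}_2\le r_H$ and $\norm{\bB\bDelta}_2\le\norm{\bB}\norm{\bDelta}_2\le\sqrt2\,r_H$, I will invoke Theorem~\ref{thm:rsc}. Strictly this needs a constant adjustment of $r_H$; I will also recheck this radius against the theorem's hypothesis. Adding the RSC inequality \eqref{eq:rsc} to Step 1 yields
\[
\alpha\norm{\bDelta}_2^2-\tau_0\tfrac{\log 2n}{m}\norm{\bDelta}_1^2\le-\inner{\bm\xi+\nabla f_H(\bbs)}{\bDelta}.
\]

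\emph{Step 3: control the tolerance and gradient terms.} From $\norm{\bDelta}_1\le 2R_1$ and the second branch of \eqref{eq:lamchoice},
\[
\tau_0\tfrac{\log 2n}{m}\norm{\bDelta}_1^2\le\tfrac{\lambda}{4}\norm{\bDelta}_1.
\]
From H\"older and the first branch,
\[
|\inner{\nabla f_H(\bbs)}{\bDelta}|\le\tfrac{\lambda}{4}\norm{\bDelta}_1.
\]

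\emph{Step 4: the penalty term.} Weak convexity of $P+\tfrac\rho2\norm{\cdot}^2$ (Assumption~\ref{as:amenable}) gives
\[
-\inner{\bm\xi}{\bDelta}\le P(\bbs)-P(\bbt)+\tfrac\rho2\norm{\bDelta}_2^2.
\]
Collecting terms,
\[
(\alpha-\tfrac\rho2)\norm{\bDelta}_2^2\le P(\bbs)-P(\bbt)+\tfrac\lambda2\norm{\bDelta}_1.
\]

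\emph{Step 5: cone argument.} Use the amenability facts $\lambda\norm{\cdot}_1\le P+\tfrac\rho2\norm{\cdot}_2^2$, subadditivity of $P$, and decomposability over $S$ (Loh--Wainwright Lemma 4). These give
\[
P(\bbs)-P(\bbt)+\tfrac\lambda2\norm{\bDelta}_1\le\tfrac{3\lambda}{2}\norm{\bDelta_S}_1-\tfrac{\lambda}{2}\norm{\bDelta_{S^c}}_1+\tfrac\rho2\norm{\bDelta}_2^2.
\]
Absorbing the $\tfrac\rho2$ term gives
\[
\underline\alpha\norm{\bDelta}_2^2\le\tfrac{3\lambda}{2}\norm{\bDelta_S}_1-\tfrac\lambda2\norm{\bDelta_{S^c}}_1.
\]

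\emph{Step 6: conclude.} Nonnegativity of the left side puts $\bDelta$ in the cone $\norm{\bDelta_{S^c}}_1\le3\norm{\bDelta_S}_1$. Then $\norm{\bDelta_S}_1\le\sqrt s\norm{\bDelta}_2$ gives
\[
\norm{\bDelta}_2\le\frac{3\lambda\sqrt s}{2\underline\alpha},
\]
and the cone gives
\[
\norm{\bDelta}_1\le4\sqrt s\norm{\bDelta}_2\le\frac{6\lambda s}{\underline\alpha}.
\]

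\emph{Main obstacle.} The hardest step is Step 5: the penalty bookkeeping with exact constants. The lower bound $\lambda|t|-\tfrac\rho2t^2\le P_\lambda(t)$ is used once, on $\bbt_{S^c}$. Its quadratic term must be combined with the weak-convexity term so the total $\rho$ charge is exactly $\rho\norm{\bDelta}_2^2$, which is what produces $\underline\alpha=\alpha-\rho$. If the bookkeeping instead charges more than $\rho$, I would use the sharper one-sided amenability inequality on $S^c$, where $\bbs_{S^c}=0$ makes $P(\bbs_{S^c})=0$ exact.
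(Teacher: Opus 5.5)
Your proposal is correct and follows the paper's proof essentially line for line. Both use the variational inequality against the feasible $\bbs$, weak convexity of $P+\tfrac\rho2\norm{\cdot}_2^2$, the comparison $P(\bbs)-P(\bbt)\le\lambda\norm{\bDelta_S}_1-\lambda\norm{\bDelta_{S^c}}_1+\tfrac\rho2\norm{\bDelta_{S^c}}_2^2$, the two $\tfrac\lambda4\norm{\bDelta}_1$ bounds, and the cone argument, and the $\rho$-charges do total exactly $\rho\norm{\bDelta}_2^2$ as you anticipate.

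The radius mismatch you flag is real, and the paper passes over it silently. Since $\bB\bB^\top=2\bm I_n$, the hypothesis $\norm{\bbt-\bbs}_2\le r_H$ only gives $\norm{\bB\bDelta}_2\le\sqrt2\,r_H$. The statement should therefore be read with $\norm{\bB(\bbt-\bbs)}_2\le r_H$ (equivalently, assuming $\norm{\bbt-\bbs}_2\le r_H/\sqrt2$), or Theorem~\ref{thm:rsc} should be re-derived on the larger radius at the cost of constants.
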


\begin{proof}
Write $\bDelta=\bbt-\bbs$, $a=\norm{\bDelta_S}_1$,
$b=\norm{\bDelta_{S^c}}_1$.
(1) Constrained stationarity against the feasible $\bbs$:
$\inner{\nabla f_H(\bbt)-\nabla f_H(\bbs)}{\bDelta}\le
-\inner{\bm\xi}{\bDelta}-\inner{\nabla f_H(\bbs)}{\bDelta}$,
$\bm\xi\in\partial P(\bbt)$.
(2) Weak convexity of $P+\frac\rho2\norm{\cdot}^2$:
$-\inner{\bm\xi}{\bDelta}\le P(\bbs)-P(\bbt)
+\frac\rho2\norm{\bDelta}^2$.
(3) Penalty comparison via (P1)--(P4) and
Assumption~\ref{as:amenable}:
$P(\bbs)-P(\bbt)\le\lambda a-\lambda b
+\frac\rho2\norm{\bDelta_{S^c}}^2$.
(4) Gradient term: $\le\frac\lambda4(a+b)$ by \eqref{eq:lamchoice}.
(5) Tolerance: $\norm{\bDelta}_1\le2R_1$ on $\Omega$, so
$\tau_0\frac{\log2n}{m}\norm{\bDelta}_1^2\le\frac\lambda4(a+b)$ by the
second branch of \eqref{eq:lamchoice}.
(6) Assembling with \eqref{eq:rsc} and collecting $\rho$-terms:
$\underline\alpha\norm{\bDelta}^2\le\frac{3\lambda}2a-\frac\lambda2b$.
(7) Nonnegativity of the left side forces $b\le3a$: the cone, whence
$\norm{\bDelta}_1\le4\sqrt s\norm{\bDelta}_2$.
(8) Dropping $-\frac\lambda2b$ and using $a\le\sqrt s\norm{\bDelta}_2$:
$\norm{\bDelta}_2\le\frac{3\lambda\sqrt s}{2\underline\alpha}$, and the
$\ell_1$ bound follows.
\end{proof}

\begin{remark}[Side constraint inactive]\label{rem:interior}
The conclusion gives $\norm{\bbt}_1\le\norm{\bbs}_1
+6\lambda s/\underline\alpha<R_1$ for $\lambda$ of the prescribed order
and $m\gtrsim s\log2n$: constrained and unconstrained stationary points
coincide, and NLD-PALM needs no $\ell_1$ projection (verified
numerically in all experiments).
\end{remark}

\begin{corollary}[Rates]\label{cor:rate}
With $\lambda\asymp L_g\min(\sigma,\dH)\sqrt{\log(2n)/m}$ and
$m\gtrsim\max\{s\log(2n/s),\,(\tau_0/\underline\alpha)^2s\log
2n,\,(\tau_0\norm{\bbs}_2/\sigma)^2s\log(2n)\}$:
\[
\norm{\bbt-\bbs}_2\lesssim\frac{L_g}{\underline\alpha}\,
\sigma\sqrt{\frac{s\log(2n)}{m}},
\]
under sub-Gaussian \emph{or} finite-variance symmetric noise.
\end{corollary}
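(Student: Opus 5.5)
The corollary is obtained by instantiating Theorem~\ref{thm:main} with a specific $\lambda$. The work is to check that the prescribed $\lambda$ and sample size make all three hypotheses hold with high probability: the RSC event, the choice \eqref{eq:lamchoice}, and the localization $\norm{\bbt-\bbs}_2\le r_H$. The bound $\frac{3\lambda\sqrt s}{2\underline\alpha}$ then becomes the claimed rate.

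\emph{Step 1: events.} I intersect the RSC event of Theorem~\ref{thm:rsc} with the event of Lemma~\ref{lem:gradtruth}. The hypotheses $m\ge C\log(2n)$ and $m\ge\log(2n)$ are implied by $m\gtrsim s\log(2n/s)$. The union bound gives failure probability $c_1e^{-c_2m}+4(2n)^{-2}+2(mn)^{-1}$. On this event
\[
4\norm{\nabla f_H(\bbs)}_\infty\le 4cL_g\min(\sigma,\dH)\sqrt{\log(2n)/m},
\]
so taking $\lambda=C'L_g\min(\sigma,\dH)\sqrt{\log(2n)/m}$ with $C'\ge4c$ satisfies the first branch of \eqref{eq:lamchoice}.

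\emph{Step 2: second branch.} We need $8\tau_0R_1\log(2n)/m\le\lambda$, which rearranges to $\sqrt m\gtrsim\tau_0R_1\sqrt{\log(2n)}/(L_g\min(\sigma,\dH))$. Take $R_1=2\norm{\bbs}_1\le2\sqrt s\norm{\bbs}_2$. The condition then reads $m\gtrsim(\tau_0\norm{\bbs}_2/\sigma)^2s\log(2n)$, absorbing $L_g$ and using $\min(\sigma,\dH)=\sigma$ under the default $\dH\ge4\sigma$. This is exactly the third term in the sample-size requirement.

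\emph{Step 3: localization.} The conclusion of Theorem~\ref{thm:main} must be consistent with $\norm{\bbt-\bbs}_2\le r_H=\dH/(8L_g)$. Since $\frac{3\lambda\sqrt s}{2\underline\alpha}\asymp\frac{L_g\sigma}{\underline\alpha}\sqrt{s\log(2n)/m}$, self-consistency requires $m\gtrsim(L_g^2\sigma/(\underline\alpha\dH))^2s\log(2n)$. I expect to absorb this into the second sample-size term $(\tau_0/\underline\alpha)^2s\log 2n$, using $\tau_0\ge c_3L_g^2$ and $\dH\gtrsim\sigma$, as in the remark following Theorem~\ref{thm:rsc}. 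I would state the corollary for stationary points in the localized ball $\norm{\bbt-\bbs}_2\le r_H$ and read ``$\bbt$'' that way. Plugging $\lambda$ into Theorem~\ref{thm:main} then gives $\norm{\bbt-\bbs}_2\le\frac{3C'}{2}\frac{L_g}{\underline\alpha}\sigma\sqrt{s\log(2n)/m}$.

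The final bound does not depend on the noise model because Lemma~\ref{lem:gradtruth} uses only that $\psidH(e_i)$ is bounded, mean-zero and has variance at most $\min(\sigma^2,\dH^2)$. Both cases (a) and (b) of Assumption~\ref{as:noise} supply this, and Theorem~\ref{thm:rsc} uses only symmetry and Chebyshev.

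\emph{Main obstacle.} Step 2 carries the real constraint. Because $R_1$ scales with $\norm{\bbs}_1$, the second branch of \eqref{eq:lamchoice} would dominate the statistical term unless $m$ grows as $(\tau_0\norm{\bbs}_2/\sigma)^2s\log(2n)$. I would check carefully that the constants and the $\sqrt s$ factor from $\norm{\bbs}_1\le\sqrt s\norm{\bbs}_2$ line up with the stated condition. The first sample-size term $s\log(2n/s)$ serves only to secure the Gaussian concentration events and the cone geometry of Lemma~\ref{lem:incoh}.
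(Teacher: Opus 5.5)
Your proposal is correct and follows the route the paper implicitly takes, since the corollary has no separate proof: instantiate Theorem~\ref{thm:main}, meet the first branch of \eqref{eq:lamchoice} via Lemma~\ref{lem:gradtruth}, meet the second branch via $R_1=2\norm{\bbs}_1\le2\sqrt s\norm{\bbs}_2$ and the third sample-size term, and absorb self-consistency with $r_H$ into the second term using $\tau_0\ge c_3L_g^2$ and $\dH\ge4\sigma$. The one looseness is in Step~2, where you ``absorb $L_g$'' and so drop a factor $1/L_g$; the statement itself makes the same simplification, and it is harmless when $L_g\gtrsim1$ (e.g.\ $g=\tanh$).
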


\subsection{Oracle property}

\begin{proposition}[Oracle property]\label{prop:oracle}
Let $E_2=\frac{3\lambda\sqrt s}{2\underline\alpha}$ and assume
$\min_{j\in S}|\beta_j^\ast|\ge a\lambda+E_2$.
(a) If $\supp(\bbt)\subseteq S$, then $\bbt$ solves the unpenalized
problem on $\R^S$ and
$\norm{\bbt-\bbs}_2\le\frac2\alpha\norm{\nabla_Sf_H(\bbs)}_2
\lesssim\frac{L_g}\alpha\min(\sigma,\dH)\sqrt{s/m}$ w.h.p.\ --- no
$\log n$, no $\lambda$-bias.
(b) If additionally $C_LE_2\le\lambda/4$ with $C_L=c(L_g^2+\dH M_g)$
and $E_2<\gamma\lambda/2$, then $\supp(\bbt)\subseteq S$.
\end{proposition}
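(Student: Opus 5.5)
Part (a) comes first and does not depend on part (b). Assume $\supp(\bbt)\subseteq S$. Theorem~\ref{thm:main} gives $\norm{\bbt-\bbs}_\infty\le\norm{\bbt-\bbs}_2\le E_2$. The beta-min condition then gives $|\tilde\beta_j|\ge|\beta_j^\ast|-E_2\ge a\lambda$ for every $j\in S$. By (P4), $P_\lambda$ is locally constant near each $\tilde\beta_j$, so $P'_\lambda(\tilde\beta_j)=0$. The $S$-block of the stationarity condition therefore reduces to $\nabla_S f_H(\bbt)=0$, and the side constraint is inactive by Remark~\ref{rem:interior}. So $\bbt$ is a stationary point of $f_H$ restricted to $\R^S$.

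Now set $\bDelta=\bbt-\bbs$, which is supported on $S$, and apply Theorem~\ref{thm:rsc}. Its radius hypothesis holds because $\norm{\bB\bDelta}_2\le\sqrt2\,\norm{\bDelta}_2$, which is small in the localized regime. This yields
\[
-\inner{\nabla_S f_H(\bbs)}{\bDelta_S}\ge\alpha\norm{\bDelta}_2^2-\tau_0\tfrac{\log(2n)}{m}\norm{\bDelta}_1^2 .
\]
Since $\norm{\bDelta}_1^2\le s\norm{\bDelta}_2^2$ and $m\gtrsim(\tau_0/\alpha)s\log(2n)$, the tolerance term is at most $\frac\alpha2\norm{\bDelta}_2^2$. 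Cauchy--Schwarz then gives $\norm{\bDelta}_2\le\frac2\alpha\norm{\nabla_Sf_H(\bbs)}_2$. To bound $\norm{\nabla_Sf_H(\bbs)}_2$, I rerun the argument of Lemma~\ref{lem:gradtruth} on only $s$ coordinates and without a union bound. Conditionally on $\bA$, the vector $\frac1m\bB_S^\top\bA^\top(g'(\bm u^\ast)\odot\psidH(\bm e))$ has mean zero. Its conditional second moment is at most $\frac{L_g^2\min(\sigma^2,\dH^2)}{m^2}\operatorname{tr}(\bB_S^\top\bA^\top\bA\bB_S)\lesssim L_g^2\min(\sigma,\dH)^2 s/m$, using $\norm{\bA\bm b_j}^2\le2m$. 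Markov's inequality, or a Hanson--Wright bound in the sub-Gaussian case, gives the stated $\sqrt{s/m}$ rate, with no $\log n$ and no $\lambda$.

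For (b) I would use a primal-dual witness argument. Let $\hat\bb$ be the oracle estimator: a stationary point of $f_H$ on $\R^S$ within radius $r_H$. Part (a) applies to it, so $\norm{\hat\bb-\bbs}_2\le E_2$ and $|\hat\beta_j|\ge a\lambda$ on $S$, and $\hat\bb$ satisfies the $S$-block of stationarity for $F$. For $j\notin S$, $\hat\bb$ is stationary for $F$ provided $|\nabla_j f_H(\hat\bb)|<\lambda=P'_\lambda(0^+)$. To check this, split $\nabla_jf_H(\hat\bb)=\nabla_jf_H(\bbs)+[\nabla_jf_H(\hat\bb)-\nabla_jf_H(\bbs)]$. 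The first term is at most $\lambda/4$ by \eqref{eq:lamchoice}. For the second, a mean-value bound on $\phi_i$ gives $|\phi_i'|\le L_g^2+\dH M_g$ inside the localization ball. Together with $\frac1m\norm{\bA\bm b_j}^2\le2$ and a restricted operator-norm bound for $\frac1m\bA^\top\bA$ on $s$-sparse directions, this yields a bound of $C_L\norm{\hat\bb-\bbs}_2\le C_LE_2\le\lambda/4$. So the off-support KKT holds strictly, and $\hat\bb$ is a stationary point of $F$.

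The remaining and hardest step is uniqueness: I must show that any localized stationary point $\bbt$ coincides with $\hat\bb$, and hence has support in $S$. I would apply the RSC inequality \eqref{eq:rsc} to the difference $\bbt-\hat\bb$ in place of $\bbt-\bbs$, together with the two stationarity conditions, and absorb the penalty through $\rho$-weak convexity. This gives $(\alpha-\rho)\norm{\bbt-\hat\bb}^2\le$ a tolerance term. By Theorem~\ref{thm:main} both points lie within $E_2$ of $\bbs$, in the cone, so the tolerance is controlled as in Theorem~\ref{thm:main} and the difference must vanish. The condition $E_2<\gamma\lambda/2$ is what I expect to be needed here. It keeps the off-support coordinates of $\bbt$ in the region $|t|<\gamma\lambda$, where MCP's derivative is still positive. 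Then the strict dual slack, $|\nabla_jf_H|\le\lambda/2<P'_\lambda$ for small $|t|$, forces those coordinates to zero, because a nonzero coordinate $t$ with $|t|\le E_2$ would need $P'_\lambda(|t|)\ge\lambda-|t|/\gamma\ge\lambda/2$ to be matched by the gradient. Making this last inequality strict without losing constants is the delicate point.
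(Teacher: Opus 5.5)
Your part (a) is the paper's argument. Beta-min plus $\norm{\bDelta}_\infty\le E_2$ puts every $j\in S$ in the flat region (P4), so stationarity reduces to $\nabla_Sf_H(\bbt)=0$. RSC on $\R^S$ and an $s$-dimensional concentration bound without a union over $2n$ coordinates then give the $\sqrt{s/m}$ rate. You use a second-moment or Hanson--Wright bound where the paper cites vector Bernstein.

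Part (b), as you organize it, has a genuine gap in the uniqueness step.

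\emph{First problem: no two-point RSC.} Theorem~\ref{thm:rsc} is a one-point inequality anchored at $\bbs$. Its multiplier part is controlled only because the residual score there, $\psidH(e_i)$, is conditionally mean-zero and independent of $\{\ba_i\}$. Re-anchoring at the data-dependent $\hat{\bb}$ replaces this score by $\psidH(e_i-d_i(\hat{\bb}))$, which is neither mean-zero nor independent of $\{\ba_i\}$. So the paper supplies no lower bound on $\inner{\nabla f_H(\bbt)-\nabla f_H(\hat{\bb})}{\bbt-\hat{\bb}}$.

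\emph{Second problem: no cone for the difference.} Even granting a two-point RSC, set $\bm D=\bbt-\hat{\bb}$. The inequality $(\alpha-\rho)\norm{\bm D}_2^2\le\tau_0\frac{\log(2n)}{m}\norm{\bm D}_1^2$ forces $\bm D=0$ only if $\norm{\bm D}_1\lesssim\sqrt s\,\norm{\bm D}_2$. Knowing that $\bbt-\bbs\in\mathcal C(S,3)$ and that $\hat{\bb}-\bbs$ is supported on $S$ does not give this. Indeed $\bm D_{S^c}=\bDelta_{S^c}$ is controlled by $\norm{\bDelta_S}_1$, not by $\norm{\bm D_S}_1$, which may be far smaller. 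In the Loh--Wainwright witness this cone comes from strict dual feasibility, a step you would have to add. The existence of $\hat{\bb}$ is also assumed rather than shown.

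The fix is already in your last sentences, and it is the paper's entire proof of (b); no witness is needed.
\begin{itemize}
\item Bound the gradient at $\bbt$ itself rather than at $\hat{\bb}$. Theorem~\ref{thm:main} gives $\norm{\bbt-\bbs}_2\le E_2$ and $\bbt-\bbs\in\mathcal C(S,3)$.
\item The coordinatewise Lipschitz bound then applies, with an upper restricted eigenvalue over this cone rather than only over $s$-sparse directions. Together with \eqref{eq:lamchoice} it gives $|\nabla_jf_H(\bbt)|\le\lambda/4+C_LE_2\le\lambda/2$.
\item Take $j\in S^c$ with $\tilde\beta_j\neq0$. Then $|\tilde\beta_j|\le E_2<\gamma\lambda/2<\gamma\lambda$, so $\tilde\beta_j$ lies in the sloped region of MCP. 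Stationarity requires $|\nabla_jf_H(\bbt)|=\lambda-|\tilde\beta_j|/\gamma$, and this is strictly greater than $\lambda/2$, a contradiction.
\end{itemize}
The strictness you called delicate comes for free from the strict hypothesis $E_2<\gamma\lambda/2$. That hypothesis is used exactly here, not in any uniqueness argument.
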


\begin{proof}
(a) Beta-min and $\norm{\bDelta}_\infty\le E_2$ put all $j\in S$ in the
zero-derivative region of $P_\lambda$ (P4), so stationarity reduces to
$\nabla_Sf_H(\bbt)=0$; RSC on the fixed $s$-dimensional subspace and a
vector-Bernstein bound on $\norm{\nabla_Sf_H(\bbs)}_2$ (no union over
$2n$ coordinates --- this is where $\log n$ disappears) give the rate.
(b) For $j\in S^c$ with $\bbt_j\ne0$, MCP stationarity forces
$|\nabla_jf_H(\bbt)|=\lambda-|\bbt_j|/\gamma$ or
$|\bbt_j|\ge\gamma\lambda$. The coordinatewise Lipschitz bound
$\norm{\nabla f_H(\bbt)-\nabla f_H(\bbs)}_\infty\le C_L\norm{\bDelta}_2$
(Cauchy--Schwarz with upper restricted eigenvalues) gives
$|\nabla_jf_H(\bbt)|\le\lambda/2$, forcing
$|\bbt_j|\ge\gamma\lambda/2>E_2\ge|\bbt_j|$ --- contradiction. The
condition $C_LE_2\le\lambda/4$ amounts to $\sqrt
s\lesssim\underline\alpha/C_L$; the primal--dual witness of
\cite{loh2017support} removes this restriction and is left as an
optional appendix.
\end{proof}

\begin{remark}[$\ell_1$ cannot debias]
$\ell_1$ stationary points satisfy $|\nabla_jf_H|=\lambda$ at every
nonzero coordinate, embedding a $c\lambda\sqrt s$ bias for any $\lambda$
compatible with \eqref{eq:lamchoice} --- visible as the error plateau in
Fig.~\ref{fig:main}.
\end{remark}

\subsection{Unknown link}\label{ssec:unknown}

Assume $g$ unknown, monotone, with $\mu_g=\E[g'(\gamma)]\ne0$,
$\theta_g^2=\E[(g(\gamma)-\mu_g\gamma)^2]$, $\gamma\sim\mathcal N(0,1)$,
$\norm{\bx^\ast}_2=1$; let $\nu^2=\theta_g^2+\sigma^2$. Run the same
program with the \emph{linear} Huberized loss and adaptive clipping
$\dH=2\nu\sqrt{\log m}$.

\begin{lemma}[Clipped decoupling]\label{lem:decouple}
With probability $\ge1-c(2n)^{-2}$,
$\norm{\frac1m\sum_i\psidH(\xi_i)\bB^\top\ba_i}_\infty\le
c\,\nu\sqrt{\log(2n)\log(m)/m}$, where
$\xi_i=g(u_i^\ast)-\mu_gu_i^\ast+e_i$.
\end{lemma}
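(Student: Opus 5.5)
Fix a column $\bm b_j$ of $\bB$ (unit norm, since $\bPhi,\bPsi$ are orthonormal) and split $\inner{\ba_i}{\bm b_j}$ into its component along $\bx^\ast$ and its orthogonal complement. Write $\inner{\ba_i}{\bm b_j}=\rho_j u_i^\ast+v_{ij}$, where $\rho_j=\inner{\bm b_j}{\bx^\ast}$ and $v_{ij}=\inner{\ba_i}{\bm b_j-\rho_j\bx^\ast}$. By Gaussianity (Assumption~\ref{as:sensing}) and $\norm{\bx^\ast}_2=1$, the variable $v_{ij}$ is independent of $u_i^\ast$, hence of $\xi_i$, and its variance is at most $1$. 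The $j$th coordinate then splits as $T_{1j}+T_{2j}$, with
\[
T_{1j}=\tfrac{\rho_j}{m}\textstyle\sum_i\psidH(\xi_i)u_i^\ast,\qquad
T_{2j}=\tfrac1m\textstyle\sum_i\psidH(\xi_i)v_{ij}.
\]

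\emph{Step 1: the orthogonal part $T_{2j}$.} Condition on $\{\xi_i,u_i^\ast\}$. Then $T_{2j}$ is centered Gaussian with variance at most $\frac1{m^2}\sum_i\psidH(\xi_i)^2$. Each summand satisfies $\psidH(\xi_i)^2\le\xi_i^2\wedge\dH^2$, and $\E\xi_i^2\le 2\nu^2$, since $\theta_g^2$ controls the link residual and $\sigma^2$ the noise. A Bernstein bound for the bounded variables $\psidH(\xi_i)^2\le\dH^2=4\nu^2\log m$ gives $\frac1m\sum_i\psidH(\xi_i)^2\lesssim\nu^2(1+\log m\,\log(2n)/m)$ with probability at least $1-(2n)^{-3}$. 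The Gaussian tail and a union bound over the $2n$ columns then give $\max_j|T_{2j}|\lesssim\nu\sqrt{\log(2n)\log(m)/m}$. The $\log m$ factor is only needed to absorb the Bernstein remainder and is not tight.

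\emph{Step 2: the aligned part $T_{1j}$ (main obstacle).} Since $|\rho_j|\le1$, it suffices to bound the single scalar $\frac1m\sum_i\psidH(\xi_i)u_i^\ast$. Without clipping, Stein's identity gives $\E[\xi_iu_i^\ast]=\E[g'(\gamma)]-\mu_g=0$, using that $e_i$ is independent of $u_i^\ast$ and mean zero; this is the Plan--Vershynin decoupling. Clipping destroys the exact identity, so I would bound the bias through
\[
\bigl|\E[\psidH(\xi)u^\ast]\bigr|=\bigl|\E[(\psidH(\xi)-\xi)u^\ast]\bigr|\le\E\bigl[|\xi|\,|u^\ast|\,\bm1\{|\xi|>\dH\}\bigr].
\]
Cauchy--Schwarz and Chebyshev bound the right side by $\sqrt{\E\xi^2(u^\ast)^2}\,\sqrt{\Prob(|\xi|>\dH)}$. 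With finite second moments only, this is $\lesssim\nu\cdot\nu/\dH\asymp\nu/\sqrt{\log m}$, which does not vanish. This is exactly where I expect the difficulty. I would first try to exploit symmetry: $e_i$ is symmetric and independent of $u^\ast$, so I would compare $\psidH(h+e)$ with $h+e$ for $h=g(u^\ast)-\mu_gu^\ast$. Alternatively I would assume that $g(\gamma)-\mu_g\gamma$ is sub-Gaussian with parameter $\theta_g$ and the noise is sub-Gaussian. Under sub-Gaussianity, $\Prob(|\xi|>2\nu\sqrt{\log m})\le 2m^{-2}$, and the bias is $O(\nu/m)$, which is negligible.

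The fluctuation of $\frac1m\sum_i\psidH(\xi_i)u_i^\ast$ around its mean is handled by Bernstein's inequality for the sub-exponential summands $\psidH(\xi_i)u_i^\ast$, whose $\psi_1$-norm is at most $\dH$. This yields deviation $\lesssim\nu\sqrt{\log(2n)/m}+\dH\log(2n)/m$, which is within the target under $m\gtrsim\log(2n)\log m$.

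\emph{Assembly.} Combine the two parts by a union bound over $j\in[2n]$, adjusting constants so that the total failure probability is $c(2n)^{-2}$. I would state the sub-Gaussian tail assumption on $\xi$ (or the symmetric-noise argument, if it works) explicitly at Step 2.
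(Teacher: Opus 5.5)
Your proposal is correct and follows the paper's proof essentially step for step. Both use the same split of $\ba_i$ along $\bx^\ast$ and its orthogonal complement. Both handle the orthogonal part by conditional Gaussianity plus a Bernstein bound on $\frac1m\sum_i\psidH(\xi_i)^2$, and the aligned part by Stein's identity, a tail bound on the clipping bias, and Bernstein for the fluctuation. The paper's bias step also relies on sub-Gaussianity of $\xi$ with parameter of order $\nu$, which monotonicity of $g$ does not imply and symmetry of $e_i$ cannot replace, so stating that assumption explicitly at Step 2 is the right call.
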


\begin{proof}
Split $\ba_i=\gamma_i\bx^\ast+\bm P_\perp\ba_i$ with
$\gamma_i=\inner{\ba_i}{\bx^\ast}$; $\xi_i$ depends on
$(\gamma_i,e_i)$ only. \emph{Orthogonal part:} conditionally Gaussian
with variance $\frac1m\cdot\frac1m\sum\psidH(\xi_i)^2\le\frac{c\nu^2}m$
w.h.p.\ (Bernstein), giving $c\nu\sqrt{\log(2n)/m}$ after a union bound.
\emph{Parallel bias:} $\E[\xi\gamma]=0$ by Stein's identity
($\E[g(\gamma)\gamma]=\E[g'(\gamma)]=\mu_g$), and
$|\E[\psidH(\xi)\gamma]|\le\sqrt{\E[\xi^2\bm1\{|\xi|>\dH\}]}
\le c\nu e^{-\dH^2/(c'\nu^2)}\le c\nu m^{-4/c'}$ by sub-Gaussianity of
$\xi$ and the choice of $\dH$.
\emph{Parallel deviation:} $\psidH(\xi_i)\gamma_i$ has envelope
$\dH|\gamma_i|$ and variance $\le c\nu^2\log m$; Bernstein plus the
union bound gives the dominant term
$c\nu\sqrt{\log m}\sqrt{\log(2n)/m}$.
\end{proof}

\begin{theorem}[Unknown-link recovery]\label{thm:unknown}
Under Assumptions~\ref{as:sensing}, \ref{as:incoh},
\ref{as:noise}(a), \ref{as:amenable}, with $m\ge Cs\log(2n)$ and
$\lambda\asymp\nu\sqrt{\log(2n)\log(m)/m}$: with probability
$\ge1-c_1e^{-c_2m}-c_3n^{-1}$, every stationary point of the
linear-surrogate objective with $\norm{\bbt-\mu_g\bbs}_2\le r_H$
satisfies
\[
\norm{\bbt-\mu_g\bbs}_2\lesssim\frac{1}{1/16-\rho}\,
\nu\,\sqrt{\frac{s\log(2n)\log m}{m}}.
\]
\end{theorem}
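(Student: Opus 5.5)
The plan is to rerun the three-ingredient template of Theorem~\ref{thm:main} (RSC, gradient at the reference point, deterministic stationary-point argument) with the known-link loss replaced by the linear surrogate
\[
f_L(\bb)=\frac1m\sum_i h_{\dH}\bigl(y_i-\inner{\ba_i}{\bB\bb}\bigr)
\]
and the reference point $\bbs$ replaced by the scaled truth $\mu_g\bbs$. Write $\xi_i=y_i-\mu_g u_i^\ast=g(u_i^\ast)-\mu_g u_i^\ast+e_i$. Then
\[
\nabla f_L(\mu_g\bbs)=-\frac1m\sum_i\psidH(\xi_i)\bB^\top\ba_i .
\]
Lemma~\ref{lem:decouple} therefore gives $\norm{\nabla f_L(\mu_g\bbs)}_\infty\le c\,\nu\sqrt{\log(2n)\log m/m}$ directly. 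This fixes the choice $\lambda\asymp\nu\sqrt{\log(2n)\log(m)/m}$ through the first branch of \eqref{eq:lamchoice}. The support of $\mu_g\bbs$ is still $S$, so the penalty comparison step (3) is unchanged.

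Next I establish RSC for $f_L$ around $\mu_g\bbs$. This is Theorem~\ref{thm:rsc} specialised to $g=\mathrm{id}$, so $\ell_g=L_g=1$, $M_g=0$, $B_i\equiv0$ and $\alpha=1/16$, which explains the constant $1/(1/16-\rho)$. In the curvature part the summand is
\[
A_it_i=[\psidH(\xi_i)-\psidH(\xi_i-t_i)]\,t_i\ge0
\]
for every sample, and it equals $t_i^2$ on $\{|\xi_i|\le\dH/2,\ |t_i|\le\dH/2\}$. The only changes from Theorem~\ref{thm:rsc} are these:
\begin{itemize}[leftmargin=1.5em,itemsep=0pt]
\item The noise is now $\xi_i$. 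It is sub-Gaussian with parameter $\lesssim\nu$ under Assumption~\ref{as:noise}(a) and the usual sub-Gaussian assumption on $g(\gamma)-\mu_g\gamma$ already used in Lemma~\ref{lem:decouple}.
\item $\dH=2\nu\sqrt{\log m}\ge4\nu$ for $m\ge e^4$, so Chebyshev gives $\Prob(b_i=1)\ge1/2$.
\item $\xi_i$ is not independent of $\ba_i$; it depends on $\gamma_i$.
\end{itemize}
The third point is the main obstacle. Lemma~\ref{lem:A1} was applied with indicators $b_i$ independent of $t_i$. I will first try a decomposition in which the indicator depends only on $\gamma_i$ and $e_i$. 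I then lower-bound $\frac1m\sum_i b_i\varphi_K(t_i)$ by splitting $\bB\bDelta$ into its component along $\bx^\ast$ and its orthogonal component. Conditionally on $(\gamma_i,e_i)$, the orthogonal part of $\ba_i$ is an independent Gaussian, so Lemma~\ref{lem:A1} applies conditionally to it. The parallel component contributes a one-dimensional term that is handled by a direct quadratic lower bound on $\frac1m\sum b_i\gamma_i^2$; this needs $\E[b\gamma^2]\gtrsim1$, which holds because $\xi$ is a function of $(\gamma,e)$ with $\Prob(|\xi|\le\dH/2)$ close to one. The cross term between the two components is a mean-zero Gaussian process and can be absorbed into the $\log(2n)\norm{\bDelta}_1^2/m$ tolerance.

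With RSC at constant $1/16$ and tolerance $\tau_0\asymp1$, I apply the deterministic argument of Theorem~\ref{thm:main} verbatim with $\underline\alpha=1/16-\rho$, including the $\ell_1$ side constraint. This gives
\[
\norm{\bbt-\mu_g\bbs}_2\le\frac{3\lambda\sqrt s}{2(1/16-\rho)} .
\]
The second branch of \eqref{eq:lamchoice} is dominated by the first once $m\ge Cs\log(2n)$, because $R_1\asymp\norm{\mu_g\bbs}_1\le\sqrt s\,|\mu_g|$ and $|\mu_g|\lesssim\nu$ in the normalisation. Substituting $\lambda$ gives the stated rate. The failure probability is the union of the RSC event, which contributes $c_1e^{-c_2m}$, and the decoupling event of Lemma~\ref{lem:decouple}, with the $n^{-1}$ term absorbing the polynomial pieces.
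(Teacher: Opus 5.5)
Your route is the paper's: Theorem~\ref{thm:rsc} at $g=\mathrm{id}$ (so $\alpha=1/16$, $B_i\equiv0$, $\tau_0\asymp1$), then Lemma~\ref{lem:decouple} in place of Lemma~\ref{lem:gradtruth} through $\nabla f_L(\mu_g\bbs)=-\frac1m\sum_i\psidH(\xi_i)\bB^\top\ba_i$, then the deterministic core of Theorem~\ref{thm:main} around $\mu_g\bbs$. Two of your supporting steps are not right as written.

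\textbf{Second branch of \eqref{eq:lamchoice}.} The claim $|\mu_g|\lesssim\nu$ is false. By Stein's identity $\theta_g^2=\E g(\gamma)^2-\mu_g^2$, so a nearly linear link has $\theta_g\approx0$ and $\mu_g\approx1$, giving $\nu\approx\sigma\ll|\mu_g|$. Feasibility of the reference point forces $R_1\ge2\norm{\mu_g\bbs}_1$, which is of order $|\mu_g|\sqrt s$ for a flat $\bbs$. Hence the branch $\lambda\ge8\tau_0R_1\log(2n)/m$ at $\lambda\asymp\nu\sqrt{\log(2n)\log m/m}$ requires $m\gtrsim(\mu_g/\nu)^2s\log(2n)/\log m$. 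This is an SNR-dependent sample condition, the analogue of the last term in Corollary~\ref{cor:rate}. Step (5) of Theorem~\ref{thm:main} needs it, because before the cone is established $\norm{\bDelta}_1$ is controlled only through $R_1$. So the $C$ in $m\ge Cs\log(2n)$ must be allowed to depend on $|\mu_g|/\nu$. The paper's ``applies verbatim'' hides the same requirement; you should state it rather than derive it from a false inequality.

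\textbf{Dependence of $b_i$ on $\ba_i$.} You are right that the paper's one-line appeal to Theorem~\ref{thm:rsc} glosses over this. Lemma~\ref{lem:A1} assumes $b_i$ independent of $\{\ba_i\}$, but here $b_i=\bm1\{|\xi_i|\le\dH/2\}$ is a function of $(\gamma_i,e_i)$. Your parallel/orthogonal split does not work, however, because $\varphi_K$ is a truncated quadratic. Writing $\bB\bDelta=c\,\bx^\ast+\bm v_\perp$, the quantity $\varphi_K(c\gamma_i+\inner{\ba_i}{\bm v_\perp})$ does not separate into a parallel term, an orthogonal term and a Gaussian cross term.

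The fix is simpler. Symmetrization, Ledoux--Talagrand contraction and Talagrand's inequality in Lemma~\ref{lem:A1} need only that the pairs $(\ba_i,b_i)$ are iid and that $t\mapsto b_i\varphi_K(t)$ is Lipschitz and vanishes at $0$. After contraction the $b_i$ drop out, and $\frac1m\sum_i\varepsilon_i\bB^\top\ba_i$ has $\mathcal N(0,1/m)$ coordinates as before. Only the population step used independence, and it survives. Since $\varphi_{4\omega}\le4\omega^2$,
\[
\E[b_i\varphi_{4\omega}(t_i)]\ge\E\varphi_{4\omega}(t_i)-4\omega^2\Prob(b_i=0),\qquad \Prob(b_i=0)\le\frac{\E\xi_i^2}{(\dH/2)^2}\le\frac{1}{\log m}.
\]
With $\E\varphi_{4\omega}(t_i)\ge0.73\,\omega^2$, this exceeds the value $0.73\cdot\tfrac12\,\omega^2$ that Lemma~\ref{lem:A1} uses for $p=\tfrac12$ once $\log m$ exceeds an absolute constant. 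With that, RSC at $\alpha=1/16$ holds and the rest of your argument goes through.
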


\begin{proof}
Write $y_i=\mu_g\inner{\ba_i}{\bx^\ast}+\xi_i$. The loss is
linear-Huber: Theorem~\ref{thm:rsc} applies with $g=\mathrm{id}$
($\ell_g=L_g=1$, $M_g=0$; the multiplier part vanishes identically,
so no noise-symmetry is needed for RSC). Lemma~\ref{lem:decouple}
replaces Lemma~\ref{lem:gradtruth}. Theorem~\ref{thm:main}'s
deterministic core applies verbatim around $\mu_g\bbs$.
\end{proof}

\section{Experiments}\label{sec:experiments}

Protocol: $n=512$, $s_1=s_2=16$, $\bPhi=\bm I$, $\bPsi=$ DCT,
$g=\tanh$, $\sigma=0.05$, $\dH=4\sigma$. Regularization uses the
\emph{frozen} rule $\lambda=c\,\sigma\sqrt{\log(2n)/m}$, with $c$
calibrated once on a $5$-seed pilot at $m=400$ and never re-tuned
($c=8$ for $\ell_1$, $16$ for SCAD, $24$ for MCP, $4$ for
$\ell_{1/2}$); the DHT baseline \cite{soltani2017fast} step is frozen at
its pilot-best likewise, and receives the true $(s_1,s_2)$.
Nonconvex methods are warm-started from the $\ell_1$ solution. $15$
Monte-Carlo trials per point, deterministic seeds, code provided.

\begin{figure*}[t]
\centering
\includegraphics[width=0.98\textwidth]{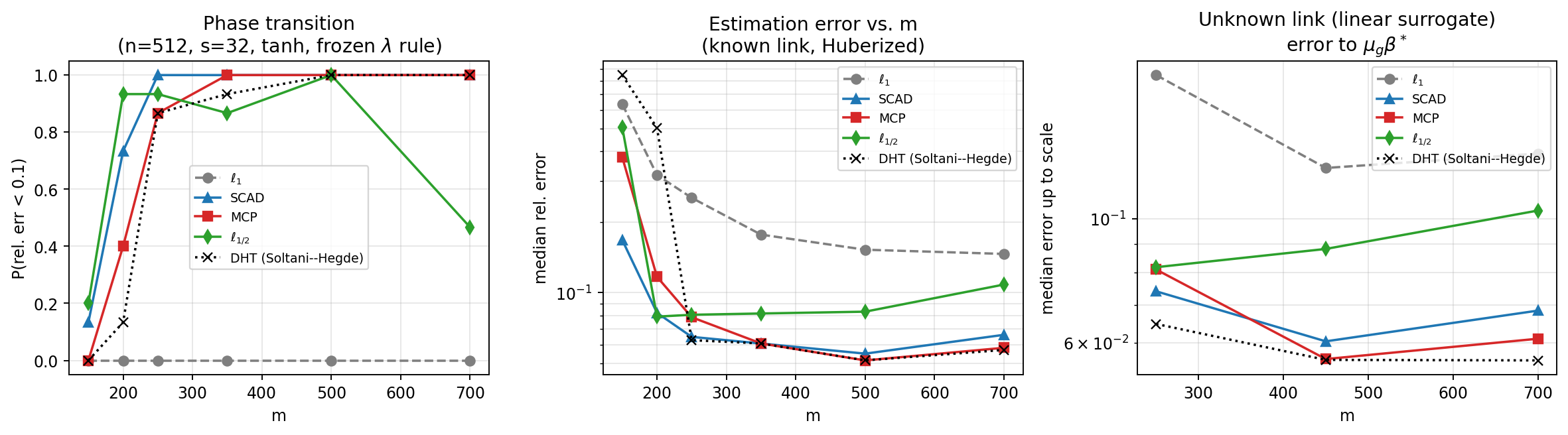}
\caption{Left, center: phase transition and median error vs.\ $m$
(known link, Huberized). SCAD crosses at $m\approx200$--$250$ vs.\
$m\approx250$--$350$ for DHT, while $\ell_1$ plateaus at its shrinkage
bias $\approx0.15$. The frozen $\sqrt{\log(n)/m}$ rule mismatches
$\ell_{1/2}$ half-thresholding at large $m$, consistent with its
exclusion from the statistical theory. Right: unknown-link estimator;
the $\theta_g$ floor of Theorem~\ref{thm:unknown} appears beyond
$m\approx450$.}
\label{fig:main}
\end{figure*}

\begin{figure}[t]
\centering
\includegraphics[width=\columnwidth]{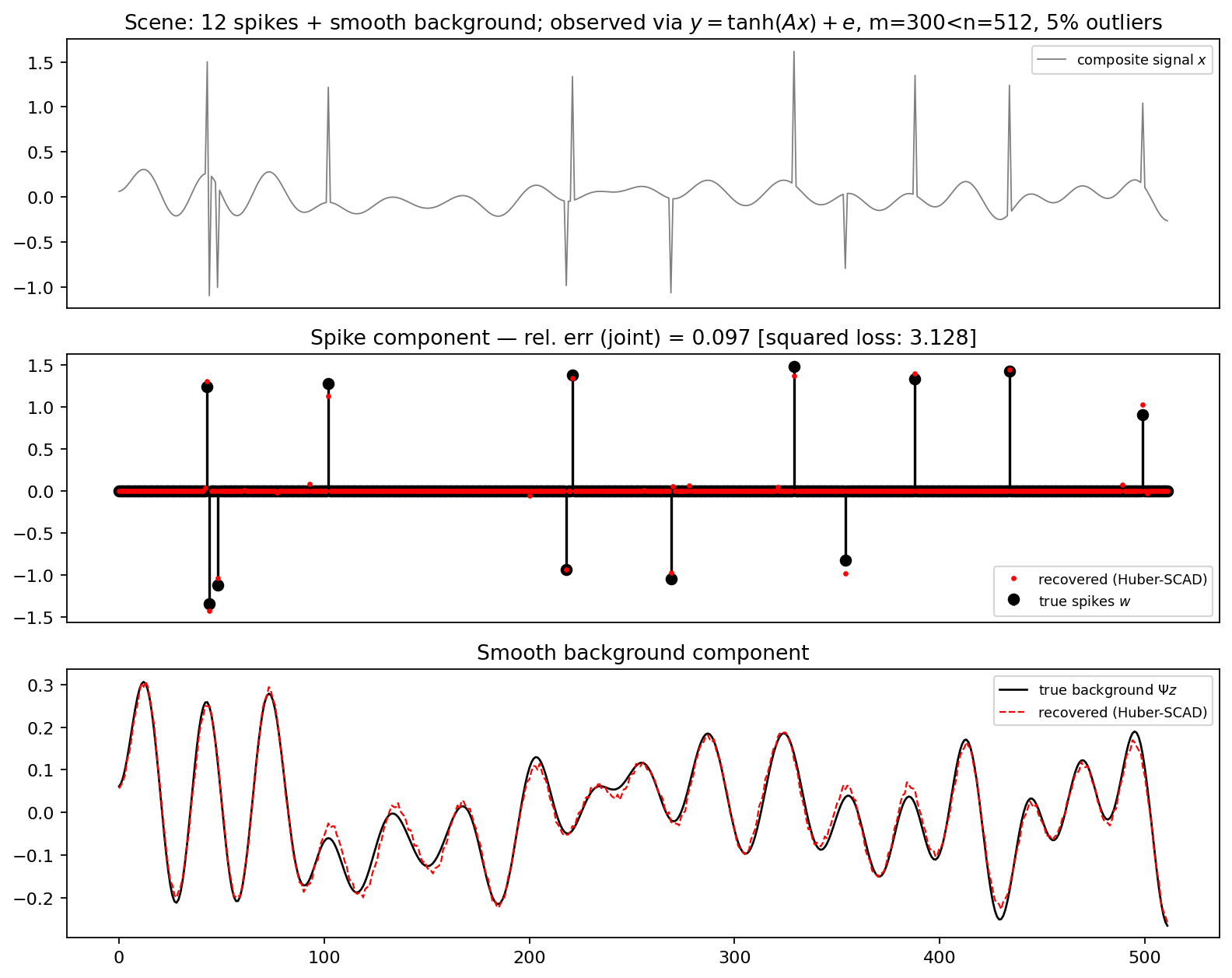}
\caption{Saturation demixing: $12$ spikes + smooth DCT background,
$y=\tanh(\bA\bx)+\bm e$, $m=300<n=512$, $5\%$ gross outliers.
Huberized SCAD: relative error $0.097$; squared-loss SCAD: $3.13$.}
\label{fig:demo}
\end{figure}

\textbf{Phase transition.} Fig.~\ref{fig:main}: under the frozen rule
SCAD/MCP transition $\approx1.3$--$1.4\times$ earlier than DHT; in the
scarce regime ($m\le200$) the gap is decisive (SCAD $0.08$--$0.17$ vs.\
DHT $0.50$--$0.85$). Under per-instance oracle tuning the gap widens to
$\approx1.5$--$2\times$; we report the frozen protocol as the
practically meaningful one. $\ell_1$ never attains exact recovery,
matching Proposition~\ref{prop:oracle}'s bias dichotomy.

\textbf{Robustness.} With $5\%$ gross outliers ($\pm3$): Huberized SCAD
$0.073$ vs.\ squared-loss SCAD $2.54$ median relative error
($35\times$); Student-$t(2)$ noise: $0.12$ vs.\ $0.54$; clean Gaussian
noise: parity ($0.060$ vs.\ $0.060$). Huberization is free insurance.
An honest negative: at high \emph{Gaussian} noise ($\sigma\ge0.1$ for
$\tanh$) recovery degrades for \emph{all} losses --- an
information-theoretic saturation effect, not loss fragility.

\textbf{Scaling checks.} Measured error scales linearly in $\sigma$
(slope $1.0$ on a log-log grid up to the saturation regime) and as
$m^{-1/2}$ over the pre-floor range, matching
Corollary~\ref{cor:rate}.

\textbf{Application.} Fig.~\ref{fig:demo}: spikes and a smooth
background demixed through a saturating amplifier with outliers; all
spikes located at $m/n\approx0.59$.

\section{Conclusion}\label{sec:conclusion}

We provided the first regularization-based treatment of nonlinear
sparse demixing: a Huberized folded-concave estimator, a provably
convergent algorithm whose over-relaxation requirement we identified,
RSC-based guarantees holding at every localized stationary point in
both known- and unknown-link regimes, oracle behavior unavailable to
convex demixing, and robustness with only finite-variance symmetric
noise. Future work: a primal--dual witness removing the dimensional
condition in Proposition~\ref{prop:oracle}(b), statistical theory for
$\ell_q$, periodic and quantized links, and a deep-unfolded NLD-PALM
with trainable threshold functions.

\appendix

\section{Proximal operators}\label{app:prox}
With step $\beta=1/(\eta L)$: \textbf{MCP} ($\gamma>\beta$):
$0$ if $|u|\le\lambda\beta$;
$\mathrm{sign}(u)(|u|-\lambda\beta)/(1-\beta/\gamma)$ if
$\lambda\beta<|u|\le\gamma\lambda$; $u$ otherwise.
\textbf{SCAD} ($a>1+\beta$): soft threshold at $\lambda\beta$ for
$|u|\le\lambda(1+\beta)$;
$((a-1)u-\mathrm{sign}(u)a\lambda\beta)/(a-1-\beta)$ for
$|u|\le a\lambda$; $u$ otherwise.
\textbf{$\ell_{1/2}$}: half-thresholding \cite{xu2012half} with
effective parameter $2\lambda\beta$ (their quadratic term carries no
$1/2$). All three are unit-tested against brute-force minimization of
the scalar prox objective; the test suite accompanies the code.

\section{Uniform bounds}\label{app:uniform}

Let $\varphi_K(t)=t^2$ on $|t|\le K/2$, $(K-|t|)^2$ on
$K/2<|t|\le K$, $0$ beyond: $K$-Lipschitz,
$t^2\bm1\{|t|\le K/2\}\le\varphi_K\le t^2\bm1\{|t|\le K\}$,
self-similar ($\varphi_K(t)=\omega^2\varphi_{K/\omega}(t/\omega)$),
monotone in $K$.

\begin{lemma}[Uniform truncated-quadratic lower bound]\label{lem:A1}
Let $b_i\in\{0,1\}$ iid, $\Prob(b_i{=}1)=p\ge\frac12$, independent of
$\{\ba_i\}$, and $m\ge c\log(2n)$. With probability
$\ge1-c_1e^{-c_2m}$, for all $\bDelta$ with
$\norm{\bm v}_2\le K/4$ ($\bm v=\bB\bDelta$):
\[
\frac1m\sum_ib_i\varphi_K(\inner{\ba_i}{\bm v})\ge
\frac p4\norm{\bm v}_2^2-c\,\frac{\log(2n)}m\norm{\bDelta}_1^2 .
\]
\end{lemma}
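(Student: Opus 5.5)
We follow the standard truncated-quadratic / peeling argument of Negahban--Wainwright and Loh--Wainwright. By the self-similarity of $\varphi_K$ and homogeneity, we first prove the bound on the sphere-type set where $\norm{\bm v}_2=\omega\le K/4$ is fixed, and then remove the dependence on $\omega$ by peeling.

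\textbf{Expectation.} For fixed $\bm v$, $\inner{\ba_i}{\bm v}\sim\mathcal N(0,\omega^2)$ and $b_i$ is independent of it. Since $\varphi_K(t)\ge t^2\bm1\{|t|\le K/2\}$ and $K/2\ge2\omega$, the Gaussian tail gives
\[
\E\,b_i\varphi_K(\inner{\ba_i}{\bm v})\ge p\bigl(\omega^2-\E[\gamma^2\omega^2\bm1\{|\gamma|>2\}]\bigr)\ge \tfrac{p}{2}\omega^2,
\]
because $\E[\gamma^2\bm1\{|\gamma|>2\}]<1/2$ for $\gamma\sim\mathcal N(0,1)$.

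\textbf{Uniform deviation at fixed scale.} Consider the class $\{\bDelta:\norm{\bB\bDelta}_2=\omega,\ \norm{\bDelta}_1\le t\}$. Each summand $b_i\varphi_K(\cdot)$ lies in $[0,K^2/4]$, and $\varphi_K(\cdot)\le \omega^2\cdot(\text{bounded})$ after rescaling via self-similarity. Bounded differences (Massart/Talagrand for bounded functions) therefore concentrate the supremum of the empirical process $Z(t)$ around its mean, with deviation $\lesssim \omega^2\sqrt{\cdot/m}$ plus a lower-order term. For the mean, symmetrize and apply Ledoux--Talagrand contraction. This is legitimate because $\varphi_K$ is $K$-Lipschitz with $\varphi_K(0)=0$; better, rescaled at scale $\omega$ it is $O(\omega)$-Lipschitz on the relevant range, since after self-similar rescaling we may take $K\asymp\omega$ by monotonicity in $K$, i.e.\ we replace $\varphi_K$ by the smaller $\varphi_{4\omega}$. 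The contraction reduces the mean to
\[
\omega\,\E\sup\Bigl|\frac1m\sum_i\varepsilon_i\inner{\ba_i}{\bB\bDelta}\Bigr|\le \omega\, t\,\E\Bigl\|\frac1m\sum_i\varepsilon_i\bB^\top\ba_i\Bigr\|_\infty\lesssim \omega\, t\sqrt{\log(2n)/m},
\]
using that the columns of $\bB$ are unit norm, so each coordinate is Gaussian with variance $1/m$. Combining, with probability $1-e^{-c m\omega^2/\omega^2}$-type bounds, we get
\[
\frac1m\sum_i b_i\varphi_{4\omega}\ge \frac p2\omega^2-\frac p8\omega^2-c\,\omega t\sqrt{\log(2n)/m}.
\]
Then $2ab\le a^2+b^2$ turns $c\,\omega t\sqrt{\log(2n)/m}$ into $\frac p8\omega^2+c'\frac{\log(2n)}{m}t^2$, which yields $\frac p4\omega^2-c'\frac{\log 2n}{m}\norm{\bDelta}_1^2$.

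\textbf{Peeling (main obstacle).} The remaining step is making this hold simultaneously over all ratios $t/\omega$ and all scales $\omega\le K/4$, with probability $1-c_1e^{-c_2m}$ and no extra $\log$ factors. Scale invariance disposes of $\omega$, since after rescaling only the ratio $\norm{\bDelta}_1/\norm{\bB\bDelta}_2$ matters. For the ratio we peel dyadically over $t/\omega\in[2^{k-1},2^k]$, with a union bound whose failure probabilities $e^{-c m -c 4^k\log(2n)}$ sum geometrically. Ratios so large that $c'\frac{\log 2n}{m}t^2\ge\frac p4\omega^2$ make the claim trivial because the left side is nonnegative, so only $O(\log m)$ shells matter, and these are absorbed by $m\ge c\log(2n)$. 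The delicate point is the bounded-differences constant: the summands must be bounded by $O(\omega^2)$, not $K^2$, which is exactly why we pass to $\varphi_{4\omega}\le\varphi_K$ before concentrating.
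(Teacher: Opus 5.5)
Your proposal is correct and follows essentially the same argument as the paper: you pass to the per-scale truncation $\varphi_{4\omega}\le\varphi_K$ by monotonicity, lower-bound the population term through $\E[\gamma^2\bm1\{|\gamma|\le2\}]$, and control the deviation by symmetrization and Ledoux--Talagrand contraction onto $\norm{\frac1m\sum_i\varepsilon_i\bB^\top\ba_i}_\infty$. As in the paper, you then concentrate with envelope $O(\omega^2)$ and discard the regime where the tolerance term dominates. The only difference is bookkeeping: you use degree-2 homogeneity to fix $\omega$ and peel over the ratio $\norm{\bDelta}_1/\norm{\bB\bDelta}_2$, whereas the paper peels dyadically over $\omega$ and handles $\norm{\bDelta}_1$ with a single absorb-or-trivial dichotomy at each scale.
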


\begin{proof}
Fix a scale $\omega=\norm{\bm v}_2$ and use the per-scale level
$K_\omega=4\omega\le K$ (monotonicity). \emph{Population:}
$\E[b_i\varphi_{K_\omega}]\ge p\,\omega^2\E[\gamma^2\bm1\{|\gamma|\le2\}]
\ge0.73\,p\,\omega^2$. \emph{Deviation:} by self-similarity, work with
$\varphi_4$ ($4$-Lipschitz) on normalized variables; symmetrization and
Ledoux--Talagrand contraction \cite{ledoux1991probability} give
$\E\sup Z\le16\,\omega\,T\sqrt{2\log(4n)/m}$ over
$\{\norm{\bDelta}_1\le T\}$, since each coordinate of
$\frac1m\sum_i\varepsilon_i\bB^\top\ba_i$ is $\mathcal N(0,1/m)$;
Talagrand concentration (envelope $4\omega^2$) upgrades to probability
$1-e^{-c_2m}$. \emph{Absorb or peel:} if
$32T\sqrt{2\log(4n)/m}\le\frac p4\omega$ the bound is direct; otherwise
$\omega^2\lesssim\frac{\log2n}mT^2$ and the tolerance term alone
validates the display. Dyadic peeling over
$\omega\in(2^{-j-1}K/4,2^{-j}K/4]$ with a union over
$O(\log m)$ shells completes the proof.
\end{proof}

\begin{lemma}[Mean-zero multiplier bound]\label{lem:A2}
In the setting of Theorem~\ref{thm:rsc}, with probability
$\ge1-(2n)^{-2}$, uniformly over $\norm{\bm v}_2\le r_H$:
\[
\Bigl|\frac1m\sum_i\psidH(e_i)h_{u_i^\ast}(t_i)\Bigr|
\le\frac{\ell_g^2}{32}\norm{\bDelta}_2^2
+c\,\frac{\dH^2M_g^2}{\ell_g^2}\frac{\log(2n)}m\norm{\bDelta}_1^2,
\]
where $h_u(t)=[g'(u+t)-g'(u)]\,t$.
\end{lemma}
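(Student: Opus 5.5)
The plan is to treat the quantity as a multiplier empirical process indexed by $\bDelta$, with multipliers $\epsilon_i'=\psidH(e_i)$. By Assumption~\ref{as:noise} these are symmetric, hence mean zero, bounded by $\dH$, and independent of $\{\ba_i\}$.

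First I condition on $\{\ba_i\}$ and the magnitudes $|e_i|$. Since $e_i$ is symmetric, $\psidH(e_i)\overset{d}{=}\varepsilon_i|\psidH(e_i)|$ with independent Rademacher signs $\varepsilon_i$. This exact symmetrization costs no factor of two, because the process is already a Rademacher process with weights $w_i=|\psidH(e_i)|\le\dH$.

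Next I work on the event $\max_i|u_i^\ast|\le\tau-2r_H$ from Assumption~\ref{as:local}, which costs $(mn)^{-1}$ and is absorbed in the theorem's probability. On this event $t\mapsto h_{u_i^\ast}(t)$ vanishes at $0$. On $|t|\le K$ it is Lipschitz with constant $\le 3M_gK$, since $|h_u'(t)|\le|g''(u+t)||t|+|g'(u+t)-g'(u)|\le2M_g|t|$. Moreover $|h_u(t)|\le M_g t^2$.

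The natural normalization is per scale, as in Lemma~\ref{lem:A1}. Fix $\omega=\norm{\bm v}_2$ and the shell $\{\omega\in(2^{-j-1}r_H,2^{-j}r_H]\}\cap\{\norm{\bDelta}_1\le T\}$. On the good event $|t_i|=|\inner{\ba_i}{\bm v}|\lesssim\omega\sqrt{\log(mn)}$, so the contraction functions $\frac{1}{\dH}w_ih_{u_i^\ast}(\cdot)$ have Lipschitz constant $\lesssim M_g\omega$ on the relevant range. I will truncate $t_i$ at level $c\,\omega$ through a Lipschitz cutoff. The Gaussian tail beyond $c\,\omega$ contributes $\dH M_g\,\omega^2e^{-c^2/2}$ in expectation, and this is made $\le\frac{\ell_g^2}{64}\omega^2$. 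I choose $c$ accordingly, so that $c$ depends on $\dH M_g/\ell_g^2$; this logarithmic dependence is acceptable inside constants.

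Then the Ledoux--Talagrand contraction gives
\[
\E\sup_{\text{shell}}\Bigl|\tfrac1m\textstyle\sum_i\varepsilon_iw_ih(t_i)\Bigr|\lesssim\dH M_g\,\omega\,\E\norm{\tfrac1m\textstyle\sum_i\varepsilon_i\bB^\top\ba_i}_\infty T\lesssim \dH M_g\,\omega T\sqrt{\log(2n)/m}.
\]
Here the last step uses that each coordinate is $\mathcal N(0,1/m)$ conditionally, exactly as in Lemma~\ref{lem:A1}.

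Talagrand/Bousquet concentration, with envelope $\dH M_g c^2\omega^2$, upgrades this to high probability per shell. A union over $O(\log m)$ dyadic shells and dyadic values of $T$ then gives failure probability $(2n)^{-2}$ after adjusting constants.

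Finally I use AM--GM: $\dH M_g\,\omega T\sqrt{\log(2n)/m}\le\frac{\ell_g^2}{64}\omega^2+c\frac{\dH^2M_g^2}{\ell_g^2}\frac{\log(2n)}{m}T^2$. Taking $T\approx\norm{\bDelta}_1$ within a factor $2$, and using $\omega=\norm{\bB\bDelta}_2\le\sqrt2\norm{\bDelta}_2$ (orthonormal blocks) to convert to $\norm{\bDelta}_2^2$, this yields the claimed display with the constant $\frac{\ell_g^2}{32}$.

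The main obstacle is the contraction step. The functions $h_{u_i^\ast}$ are only locally Lipschitz, with constant growing in $|t|$, so the truncation at scale $c\,\omega$ must be done before symmetrization. Its bias must also be controlled uniformly. I would handle the truncated-away part by bounding $\frac1m\sum_i\dH M_g t_i^2\bm1\{|t_i|>c\,\omega\}$ uniformly via the same peeling plus an upper-RE bound on $\frac1m\norm{\bA\bm v}^2$, absorbing it into the $\ell_g^2\omega^2/64$ budget.
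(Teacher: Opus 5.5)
Your skeleton matches the paper's proof. Both use mean-zero multipliers $\psidH(e_i)$, contraction down to the linear class, the bound $\sqrt{\log(2n)/m}$ on $\norm{\frac1m\sum_i\varepsilon_i\bB^\top\ba_i}_\infty$, Young's inequality, Talagrand concentration, and a union over dyadic shells. Your exact symmetrization $\psidH(e_i)\overset{d}{=}\varepsilon_i|\psidH(e_i)|$ is a clean substitute for the paper's multiplier inequality.

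Where you depart from the paper, by truncating $t_i$ at $c\,\omega$ before contracting, you are patching a real hole. The paper contracts as though $h_u$ were $2M_gK_\omega$-Lipschitz at every sample. Uniformly over $\bDelta$, nothing keeps the $t_i$ in that band: over $\{\norm{\bDelta}_1\le T\}$ one only has $|t_i|\lesssim T\sqrt{\log(mn)}$.

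\textbf{The gap is in your last step.} The quantity you propose to control, $\frac{2\dH M_g}{m}\sum_it_i^2\bm1\{|t_i|>c\,\omega\}$, cannot be absorbed into $\frac{\ell_g^2}{64}\omega^2$ uniformly. Take $\bDelta=(\bm\delta,\bm0)$, where $\bm\delta$ equals $\omega k^{-1/2}$ times the signs of the $k$ largest coordinates of $\bPhi^\top\ba_{i_0}$ and is zero elsewhere. Then $\norm{\bDelta}_1=\omega\sqrt k$ but $t_{i_0}^2\gtrsim\omega^2k\log(n/k)$. So that sum is $\gtrsim\dH M_g\frac{\log(n/k)}{m}\norm{\bDelta}_1^2$, far above $\omega^2$ once $k\log n\gg m$. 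Any uniform bound must therefore carry an additive $c\,\dH M_g\frac{\log(2n)}{m}\norm{\bDelta}_1^2$. That coefficient is not dominated by $\dH^2M_g^2/\ell_g^2$ when $\dH M_g<\ell_g^2$.

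This is not an artifact of your route. Suppose $g''\equiv M_g$ on $[-\tau,\tau]$ (so $h_u(t)=M_gt^2$), $k=\sqrt n$, and $|\psidH(e_{i_0})|\gtrsim\sigma$. The same $\bDelta$ then makes the left side of the lemma itself $\gtrsim\sigma M_g\frac{\log n}{m}\norm{\bDelta}_1^2$; the other $m-1$ samples contribute only $O(\sigma M_g\omega^2/\sqrt m)$. Hence the display as stated fails when $\dH M_g/\ell_g^2$ is small and $m\ll(\dH M_g/\ell_g^2)\sqrt n\log n$.

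\textbf{What your argument can deliver.} First, an upper-RE bound alone is not enough, because its leading constant is $O(1)$, not small. Replace it with two ingredients, combined through $t^2\bm1\{|t|>c\omega\}\le t^2-\varphi_{c\omega}(t)$:
\begin{itemize}
\item a $(1+\delta)$-sharp uniform upper bound on $\frac1m\sum_it_i^2$;
\item a Lemma~\ref{lem:A1}-type uniform lower bound on $\frac1m\sum_i\varphi_{c\omega}(t_i)$ with the population constant $\E\varphi_c(\gamma)-\delta$, not $p/4$.
\end{itemize}
With these, your route proves the display with tolerance coefficient of order $\dH M_g+\dH^2M_g^2/\ell_g^2$. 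There is also the $\log(e+\dH M_g/\ell_g^2)$ factor your choice of $c$ already costs, since the truncated $h_u$ has Lipschitz constant $\asymp M_g c\,\omega$, not $M_g\omega$.

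The extra $\dH M_g$ term is harmless for Theorem~\ref{thm:rsc}: $\dH M_g\le\frac12(L_g^2+\dH^2M_g^2/\ell_g^2)$, so it fits inside $\tau_0$.

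A minor point: your two $\frac{\ell_g^2}{64}\omega^2$ budgets sum to $\frac{\ell_g^2}{16}\norm{\bDelta}_2^2$ after using $\omega^2\le2\norm{\bDelta}_2^2$, so halve them to reach $\frac{\ell_g^2}{32}$.
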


\begin{proof}
$h_u$ is $2M_gK_\omega$-Lipschitz with $h_u(0)=0$ on the per-scale band;
$\psidH(e_i)$ are iid mean-zero (symmetry), bounded by $\dH$,
independent of $\{\ba_i\}$. The multiplier inequality and contraction
\cite{ledoux1991probability} reduce the supremum to the linear class,
giving per shell $\lesssim\dH M_g\omega\norm{\bDelta}_1
\sqrt{\log(2n)/m}$; Young's inequality
($ab\le\eta a^2+b^2/4\eta$, $\eta=\ell_g^2/32$) splits this into the
displayed form; Talagrand concentration and the shell/coordinate union
complete the proof.
\end{proof}

\end{document}